\newtheorem{theorem}{Theorem}[section]
\newtheorem{remark}{Remark}
\newtheorem{definition}{Definition}
\definecolor{yjc}{RGB}{225,0,100}
\definecolor{lxs}{RGB}{138,43,226}
\definecolor{own_pink}{RGB}{217,25,169}
\definecolor{own_blue}{RGB}{0,100,223}
\definecolor{own_pink}{RGB}{217,25,169}
\definecolor{own_blue}{RGB}{0,100,223}
\newcommand\reallywidehat[1]{%
\savestack{\tmpbox}{\stretchto{%
  \scaleto{%
    \scalerel*[\widthof{\ensuremath{#1}}]{\kern-.6pt\bigwedge\kern-.6pt}%
    {\rule[-\textheight/2]{1ex}{\textheight}}
  }{\textheight}%
}{0.5ex}}%
\stackon[1pt]{#1}{\tmpbox}%
}
\newcommand\reallywidecheck[1]{%
\savestack{\tmpbox}{\stretchto{%
  \scaleto{
    \scalerel*[\widthof{\ensuremath{#1}}]{\kern-.6pt\bigwedge\kern-.6pt}%
    {\rule[-\textheight/2]{1ex}{\textheight}}
  }{\textheight}%
}{0.5ex}}%
\stackon[1pt]{#1}{\scalebox{-1}{\tmpbox}}%
}
\definecolor{codegreen}{rgb}{0,0.6,0}
\definecolor{codegray}{rgb}{0.5,0.5,0.5}
\definecolor{codepurple}{rgb}{0.58,0,0.82}
\definecolor{backcolour}{rgb}{0.95,0.95,0.92}
\lstdefinestyle{mystyle}{
    backgroundcolor=\color{backcolour},   
    commentstyle=\color{codegreen},
    keywordstyle=\color{magenta},
    numberstyle=\tiny\color{codegray},
    stringstyle=\color{codepurple},
    basicstyle=\ttfamily\footnotesize,
    breakatwhitespace=false,         
    breaklines=true,                 
    captionpos=b,                    
    keepspaces=true,                 
    numbers=left,                    
    numbersep=5pt,                  
    showspaces=false,                
    showstringspaces=false,
    showtabs=false,                  
    tabsize=2
}
\newcommand{\set}[1]{\left\{ #1 \right\}}
\newcommand{\brac}[1]{\left[ #1 \right]}
\newcommand{\norm}[1]{\left \Vert #1 \right\Vert}
\newcommand{\M}{\mathcal{M}}
\newcommand{\E}{\mathcal{E}}
\newcommand{\R}{\mathcal{R}}
\newcommand{\LL}{\mathcal{L}}
\DeclareMathOperator*{\argmin}{arg\,min}
\title{Few-Shot Test-Time Optimization Without Retraining for Semiconductor Recipe Generation and Beyond}
\author{%
  \textbf{Shangding Gu\textsuperscript{1}}\thanks{corresponding to \textit{shangding.gu@berkeley.edu}},~
  \textbf{Donghao Ying\textsuperscript{1}},
  \textbf{Ming Jin\textsuperscript{2}},
  \textbf{Yu Joe Lu\textsuperscript{3}},
  \textbf{Jun Wang \textsuperscript{4}}, \vspace{5pt} \\ 
  \textbf{  Javad Lavaei \textsuperscript{1}}, 
  \textbf{Costas Spanos\textsuperscript{1}} \\ \And
  \textsuperscript{1}UC Berkeley \ \
  \textsuperscript{2}Virginia Tech \ \ 
  \textsuperscript{3}Lam Research \ \  
  \textsuperscript{4}UCL \ \ 
}
\begin{document}

\maketitle

\begin{abstract}

We introduce Model Feedback Learning (MFL), a novel test-time optimization framework for optimizing inputs to pre-trained AI models or deployed hardware systems \textit{without} requiring any retraining of the models or modifications to the hardware. In contrast to existing methods that rely on adjusting model parameters, MFL leverages a lightweight reverse model to iteratively search for optimal inputs, enabling efficient adaptation to new objectives under deployment constraints. This framework is particularly advantageous in real-world settings, such as semiconductor manufacturing recipe generation, where modifying deployed systems is often infeasible or cost-prohibitive. We validate MFL on semiconductor plasma etching tasks, where it achieves target recipe generation in just five iterations, significantly outperforming both Bayesian optimization and human experts. Beyond semiconductor applications, MFL also demonstrates strong performance in chemical processes (e.g., chemical vapor deposition) and electronic systems (e.g., wire bonding), highlighting its broad applicability. Additionally, MFL incorporates stability-aware optimization, enhancing robustness to process variations and surpassing conventional supervised learning and random search methods in high-dimensional control settings. By enabling few-shot adaptation, MFL provides a scalable and efficient paradigm for deploying intelligent control in real-world environments.
\end{abstract}

\section{Introduction}

Semiconductor manufacturing has been a central driver of AI progress, producing increasingly powerful chips that meet large-scale computational demands \citep{kanarik2023human, khan2020ai}. Yet, after manufacturing equipment is deployed, optimizing input parameters (e.g., in etching processes) remains challenging. Concurrently, AI has shown remarkable success across diverse fields—ranging from the Game of Go \citep{silver2016mastering} and protein design \citep{jumper2021highly} to robotics \citep{kroemer2021review, gu2023safe, kumar2024robohive}, autonomous driving \citep{grigorescu2020survey, zhao2024autonomous}, and conversational agents \citep{ouyang2022training, achiam2023gpt}. However, as these models grow in complexity, retraining them after deployment is often computationally expensive or operationally infeasible. This challenge leads to a critical question: 
\begin{center}
    \textbf{How can we adapt the inputs to a deployed model to achieve new objectives, without modifying or retraining the model itself?} 
\end{center}

We address this question with Model Feedback Learning (MFL), which efficiently performs test-time optimization to identify optimal inputs for new targets without modifying the parameters of the deployed model. By iteratively fine-tuning a lightweight reverse model, MFL adapts the system’s behavior to new requirements or corrections.

\begin{figure}
    \centering
    \includegraphics[width=0.99\linewidth]{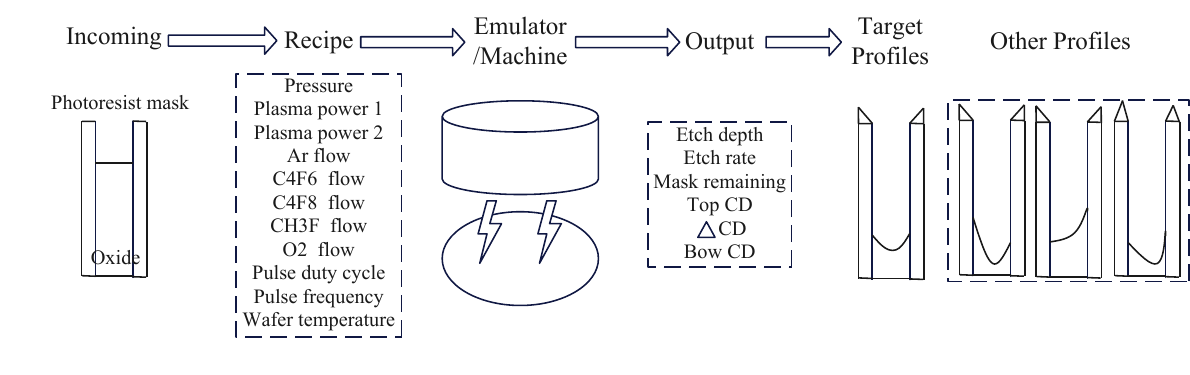}    
    \vspace{-15pt}
    \caption{Schematic of the semiconductor manufacturing process, showing the incoming photoresist mask, recipe inputs (e.g., gas flows, plasma powers, pulsing parameters, and wafer temperature), and the resulting etched profiles \citep{kanarik2023human}. The emulator/machine outputs include etch depth, etch rate, mask remaining, top CD, $\Delta$CD, and bow CD. Deviations (as seen in the mismatch between target and actual profiles) can occur due to process variability.}    
    \label{fig:semiconductor-manufacturing-overview-nature}
\end{figure}

To validate MFL, we apply it to semiconductor manufacturing for chip production \citep{shearn2010advanced}—a high-cost domain that demands nanometer-scale precision \citep{kanarik2023human}. Specifically, we focus on plasma etching recipe generation (Figure~\ref{fig:semiconductor-manufacturing-overview-nature}), where small variations in input parameters (e.g., gas flow rates, plasma power, wafer temperature) can significantly impact critical dimensions and etch depths. As modifying the underlying physical model is prohibitively expensive, effective input optimization is essential. Further details on semiconductor plasma etching are provided in Appendix~\ref{appendix:plasma_etching}. To assess the generality of our approach, we also apply MFL to other applications, including Chemical Vapor Deposition (CVD) \citep{jones2009chemical, sun2021chemical} and wire bonding \citep{chauhan2014copper, zhong2011overview}. In both cases, our method achieves high efficiency and performance, requiring only 5 iterations for CVD and 9 iterations for wire bonding to produce satisfactory results—all while strictly adhering to domain-specific constraints on both input and target variables.

Our key contributions are summarized as follows:
\begin{itemize}[leftmargin=*]
    \item \textbf{Retraining-Free Input Optimization:} Through test-time optimization with a lightweight reverse model that iteratively searches for input values yielding desired outputs, MFL eliminates the need to retrain deployed models, thereby reducing computational costs and enabling real-world deployment in resource-constrained settings.
    \item  \textbf{Applications in Semiconductor Manufacturing and Beyond:} Through comprehensive experiments, MFL demonstrates precise control over etch depth, mask remaining, and critical dimensions in plasma etching. It also generalizes effectively to other domains, including chemical and electronic applications. 
    \item \textbf{Improved Efficiency and Robustness:} MFL with a two-loop learning framework achieves substantial improvements in data efficiency compared to existing methods and demonstrates strong robustness to noisy environments.
\end{itemize}


Existing methods typically rely on training a comprehensive dynamics model of the entire system, which is both computationally expensive and data-intensive. In contrast, our method utilizes a lightweight, task-specific reverse model to enable efficient test-time optimization for identifying optimal inputs given a desired output. Furthermore, when applied to semiconductor manufacturing and other applications, MFL outperforms relevant baselines, highlighting its practical effectiveness and adaptability.

\section{Related Work}
\label{section:related-work}
\paragraph{Physical Model-Based Semiconductor Manufacturing}
Semiconductor processes have long relied on physical models for tasks like etching and deposition \citep{wang2009modeling, wei2009advanced, chang2013novel, nicole2011modeling, monahan2005enabling, shen2017three, may2006fundamentals}. For example, \citep{kim2024application} uses plasma-based virtual metrology to generate etching recipes. However, such methods often require specialized plasma data, which may be unavailable in many scenarios. Similarly, \citep{park2024plasma} proposes a traditional plasma etching method that performs well but depends heavily on prior specific  knowledge, limiting its broader applicability. In related work, \citep{park2024damage} achieves atomic-scale precision with a damage-free plasma source, yet also relies on extensive expert domain insights.

\paragraph{Data-Driven Semiconductor Manufacturing}
Recently, machine learning has gained popularity for semiconductor process optimization \citep{shim2016machine, kim2005prediction, ma2014fast, ma2017fast, myung2021novel, jeong2021bridging, sawlani2024perspectives, hosseinpour2024novel}. For instance, \citep{kanarik2023human} combines Bayesian optimization and human collaboration to optimize etching profiles, while \citep{yao2025etching} uses cascade recurrent neural networks (RNNs) trained on simulation data. Transformer-based approaches \citep{waswani2017attention} have also been explored, such as \citep{wang2024data}’s etching rate prediction augmented by soft sensors. Other methods leverage backpropagation neural networks \citep{chen2020etch, li2012brief} or multiscale modeling with RNNs \citep{xiao2021multiscale}. Although effective, these strategies typically rely on repeated model retraining for each new target, which becomes computationally expensive in practice.

\paragraph{Test-Time Training} Several test-time training methods have been proposed in recent years \citep{sun2020test}, with growing applications in computer vision \citep{sinha2023test} and language models \citep{akyurek2024surprising}. For instance, \citep{sun2020test} explores test-time training without relying on labeled data; however, their method typically requires a large number of samples to achieve optimal performance, limiting its effectiveness in few-shot scenarios. \citep{liu2021ttt} proposes a method that combines online moment matching with offline feature summarization to enhance performance, though its effectiveness may degrade under real-world conditions due to imperfect feature alignment. \citep{bartler2022mt3} presents a meta test-time training framework that integrates meta-learning and self-supervised learning. \citep{gandelsman2022test} introduces a mask encoder for test-time adaptation, and \citep{wang2021tent} adopts entropy minimization to guide the adaptation process. In contrast, our approach based on model feedback learning can efficiently search for target solutions with only a few-shot adaptation steps using a light-weight model (e.g., 7 kB), even under significant distribution shifts.

Our work is most closely related to inverse neural networks \citep{ardizzone2018analyzing, li2018nett}, and Internal Model Control (IMC) \citep{garcia1982internal}. Unlike inverse neural networks, which often involve retraining the original model, our MFL framework avoids retraining by using a lightweight reverse model to adapt inputs efficiently. It also differs from IMC, which heavily depends on accurate system models and lacks a direct feedback mechanism from real-world machines. In contrast, MFL does not require an exact model and can adaptively learn from data, allowing it to tackle real-world process variability. Furthermore, MFL is model-agnostic, easily integrates with existing AI algorithms, and rapidly identifies optimal inputs when process targets shift.

\section{Preliminary}

\begin{figure}[tb]
\centering
  {
\includegraphics[width=0.6\linewidth]{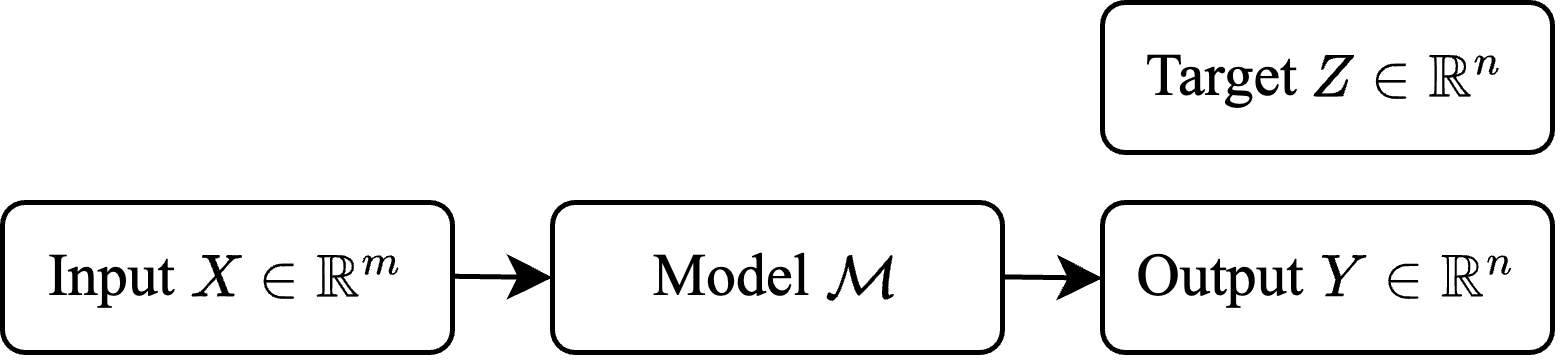}
}      
	\caption{\normalsize Illustration of standard supervised learning.
  }
\label{fig:model-feedback-sl}
\end{figure}

In recent years, supervised learning has emerged as one of the primary data-driven approaches for generating recipes in semiconductor manufacturing \citep{chen2024exploring}. In a standard supervised learning problem, we are usually given the input data $X \in \mathbb{R}^m$ and target data $Z\in \mathbb{R}^{m}$ that follow some underlying distributions\footnote{We use capitalized letters to denote random variables and non-capitalized letters to denote realizations.}.
Typically, the goal of the supervised learning is to train a model $\M$, parameterized by neural networks with some parameter $\theta_{\M} \in \mathbb{R}^{n \times m}$, such that its output $Y\in \mathbb{R}^n$ closely approximates the target $Z$, i.e., $\M(X) = Y \approx Z$ (see Figure \ref{fig:model-feedback-sl} for an illustration).
This can be achieved by training $\M$ in a supervised learning approach, i.e., feed the model with input data $X$ and target data $Z$, then optimize the parameter $\theta_{\M}$ to minimize the difference between $Y$ and $Z$.

However, in some scenarios, particularly after model $\M$ has been deployed, it becomes challenging to retrain the model to adapt to new environments with a different desired target $Z^\prime\in \mathbb{R}^{m}$.
In such cases, instead of modifying the model parameter $\theta_{\M}$ itself, we aim for an alternative approach: finding a suitable input $X^\prime \in \mathbb{R}^{n}$ such that the model $\M$ produces an output $Y^\prime \in \mathbb{R}^{m}$ that closely approximates the target $Z^\prime$, i.e., $\M(X^\prime) = Y^\prime \approx Z^\prime$.
This approach, known as input optimization, allows tuning the model's input without retraining the model itself.
It is particularly useful in deployed systems or when computational resources are limited, e.g., in semiconductor manufacturing where the data is expensive. 
Formally, our problem can be cast as:
\begin{align}
\label{eq:problem-formulation-key}
\min_{X^\prime\in \mathbb{R}^n} \text{error}(\M(X^\prime), Z^\prime),
\end{align}
where $\text{error}(\cdot,\cdot)$ is some metric quantifying the discrepancy between the model’s output and the target.
Here, we highlight the difference between our problem \eqref{eq:problem-formulation-key} and standard optimization problems.
\begin{itemize}[leftmargin = *]
    \item \textbf{Standard optimization:} The goal is to find a specific solution  $x^\star$  that minimizes some given loss function $f(\cdot)$. When $f(\cdot)$ is a black-box function—difficult to access or evaluate—techniques like Bayesian optimization can be employed to efficiently solve the problem \citep{frazier2018tutorial}.
    \item \textbf{Our problem \eqref{eq:problem-formulation-key}:} The focus is on identifying suitable inputs  $X^\prime$ that correspond to a distribution of new targets $Z^\prime$. 
    Instead of finding a single optimal solution, the objective is more close to approximating the inverse mapping $\M^{-1}(\cdot)$ for the entire target group $Z^\prime$. 
    Notably, Eq.~\eqref{eq:problem-formulation-key} reduces to a standard optimization problem only in the special case of identifying a single input $x^\prime$  such that $\M(x^\prime) = z^\prime$ for a specific target $z^\prime$.
\end{itemize}

\section{Method}
\label{section:method}

\subsection{Two-Loop Feedback Learning}\label{subsec:two_loop_MFL}

To address the optimization problem formulated in Eq. \eqref{eq:problem-formulation-key}, we propose Model Feedback Learning (MFL), a novel framework to identify an ideal input $X^\prime$ that minimizes the discrepancy between the model output and desired target.
The MFL framework comprises two key components:
\vspace{-5pt}
\begin{enumerate}[leftmargin = 4.7ex]
    \item[\textbf{(I)}] Using neural networks to build an emulator $\mathcal{E}$ and a reverse emulator $\mathcal{R}$.
    The emulator $\mathcal{E}$ is used to replicate the behavior of the machine model $\mathcal{M}$, with its parameter remaining fixed during subsequent stages.
    The reverse model $\mathcal{R}$ is designed to map desired targets to corresponding inputs, and it will be specifically optimized by the mechanism described below.
    \item[\textbf{(II)}] A practical two-loop training process for the reverse emulator $\mathcal{R}$, designed for robust test-time performance in real-world applications, as depicted in Figure \ref{fig:model-feedback-sl-robust}.
    \textbf{Loop A}: Pre-training $\mathcal{R}$ using an emulator model $\mathcal{E}$ through an iterative optimization process.
    \textbf{Loop B}: Training $\mathcal{R}$ using the machine model $\mathcal{M}$ through an iterative optimization process.    
    Note that in these two iterative processes, we only update the reverse emulator $\mathcal{R}$ and do not modify the emulator and machine models.
\end{enumerate}

\begin{figure}
    \centering
  {
\includegraphics[width=0.8\linewidth]{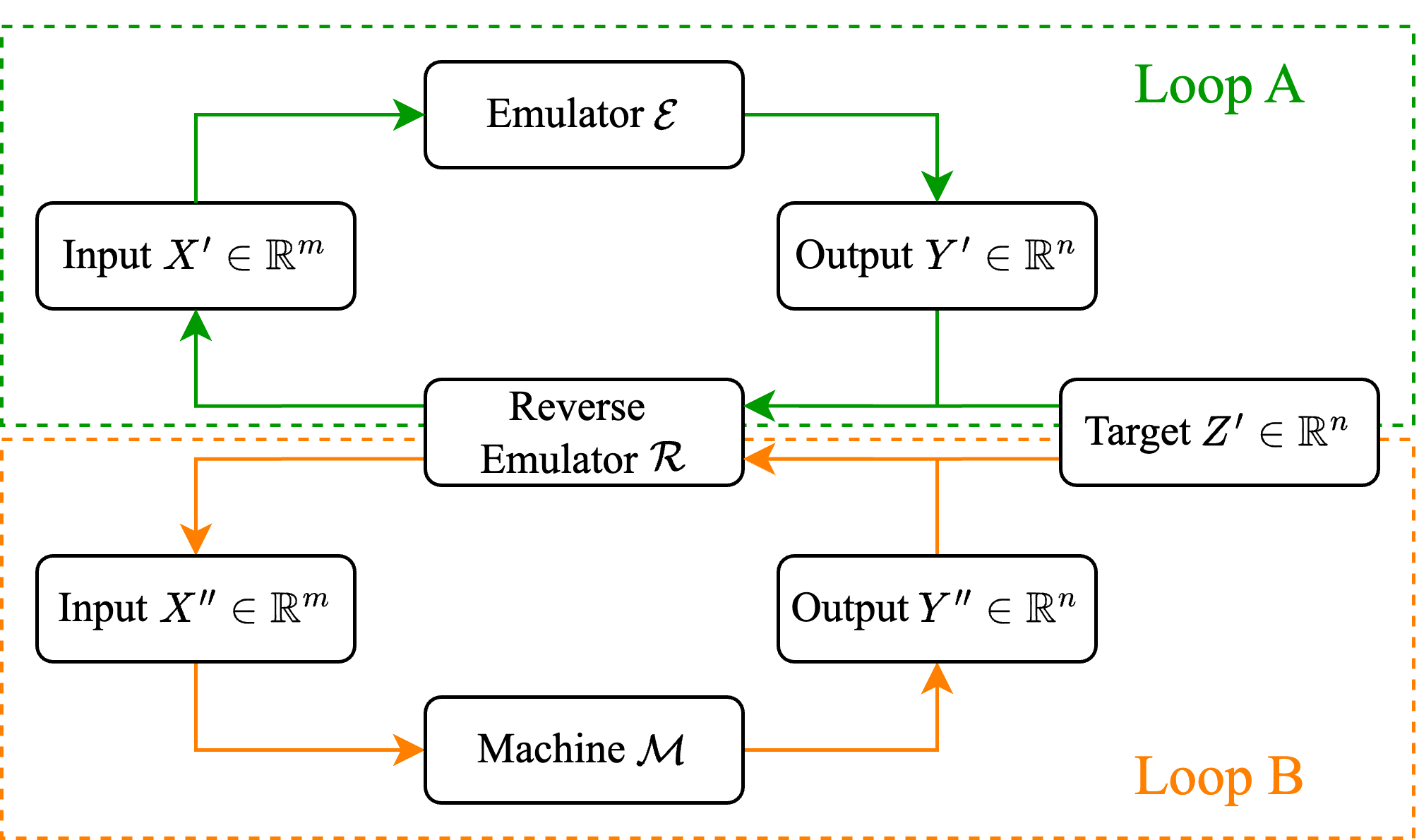}
}    
\caption{\normalsize 
    Two-loop training process for the reverse emulator model $\R$: Pre-train $\R$ using an emulator model $\E$ in Loop A and tune $\R$ using the machine model $\M$ in Loop B.
    } 
  \label{fig:model-feedback-sl-robust}
\end{figure}

Let $\theta$ be the parameter of the reverse emulator, and we use $\R_{\theta}$ to refer to the parameterized reverse model.

Below, we detail the training process of the reverse model $\R$ in Loop A.
In each iteration $t\geq 0$, the current reverse model $\R_{\theta^t}$ is applied to the target data $Z^\prime$ to produce an approximated input, $X_t^\prime = \R_{\theta^t}(Z^\prime)$, where $\theta^t$ denotes the current parameter of $\R$.
This input is then passed through the emulator $\E$ to compute the corresponding output, $Y_t^\prime = \E(X_t^\prime) = \E(\R_{\theta^t}(Z^\prime))$.

The discrepancy $Z^\prime - Y_t^\prime$ arises from two sources: (1) the approximation error of the emulator $\E$ relative to the machine model $\M$, and (2) the suboptimality of the reverse model $\R_{\theta^t}$ relative to the ideal reverse model. 
To better align the reverse model with the emulator $\E$, we define the loss function as the mean-squared error (MSE) between the target data $Z^\prime$ and the predicted output $Y_t^\prime$.
Given the realized samples $\{z_j^\prime\}_{j=1}^{n^\prime}$ of $Z^\prime$, $\{x_{t,j}^\prime\}_{j=1}^{n^\prime}$ of $X^\prime_t$, and $\{y_{t,j}^\prime\}_{j=1}^{n^\prime}$ of $Y_t^\prime$, the MSE at iteration $t$ is expressed as:
\begin{equation}\label{eq: loss_main}
\hspace{-3pt}\LL(\theta^t) := \frac{1}{n^\prime} \sum_{j=1}^{n^\prime} \|z_j^\prime - y_{t,j}^\prime\|^2= \frac{1}{n^\prime} \sum_{j=1}^{n^\prime} \|z_j^\prime - \E(x_{t,j}^\prime)\|^2\hspace{-3pt},
\end{equation}
where $\|\cdot\|$ denotes the $L^2$-norm.
Since the output $Y_t^\prime$ is given by $Y_t^\prime = \E(X_t^\prime) = \E(\R_{\theta^t}(Z^\prime))$ for $t \geq 0$, the loss $\LL(\theta^t)$ is indeed a function of $\theta^t$.
To minimize this loss, we employ gradient descent to iteratively update $\theta$. For each iteration $t \geq 0$, the parameter update rule is given by:
\begin{equation}\label{eq: theta_update}
\theta^{t+1} = \theta^t - \alpha^t \cdot \nabla_\theta \LL(\theta^t),
\end{equation}
where $\alpha^t$ is the learning rate.
This optimization process progressively adjusts $\theta$ to reduce the discrepancy between $\R$ and the ideal reverse model at each iteration.
We can use the chain rule to expand the gradient $\nabla_\theta \LL(\theta^t)$ as follows 
\begin{equation}\label{eq: grad_main}
\hspace{-3pt}\nabla_\theta \LL(\theta^t) \hspace{-2pt}=\hspace{-2pt} \frac{2}{n^\prime}\sum_{j=1}^{n^\prime}\hspace{-2pt}\brac{\frac{\partial \R_{\theta^t}(z_j^\prime)}{\partial \theta}}^\top\hspace{-5pt}\cdot\hspace{-1pt} \brac{\frac{\partial \E(x^\prime_{t,j})}{\partial x}}^\top\hspace{-6pt}\cdot (y_{t,j}^\prime \hspace{-2pt}-\hspace{-1pt} z_j^\prime),\hspace{-1pt}
\end{equation}
where $\frac{\partial \R_{\theta^t}(z_j^\prime)}{\partial \theta}$ and $\frac{\partial \E(x^\prime_{t,j})}{\partial x}$ are Jacobian matrices with dimensions $m\times p$ and $n\times m$, respectively.

It is important to recognize that the training process in Loop A primarily aligns the reverse model $\R$ with the emulator model $\E$ rather than the true machine model $\M$. 
While this alignment is an essential first step, it is not sufficient to fully capture the behavior of $\M$ due to discrepancies between $\E$ and $\M$. 
These discrepancies, as illustrated in Figure~\ref{fig:linear-nonlinear-settings}, arise because $\E$ is trained to approximate $\M$ and may not perfectly replicate its behavior.

\begin{figure}
    \centering
    \vspace{-5pt} 
\centering
\hspace{-10pt}
\subcaptionbox{Linear Setting.}{
    \includegraphics[width=0.45\linewidth]{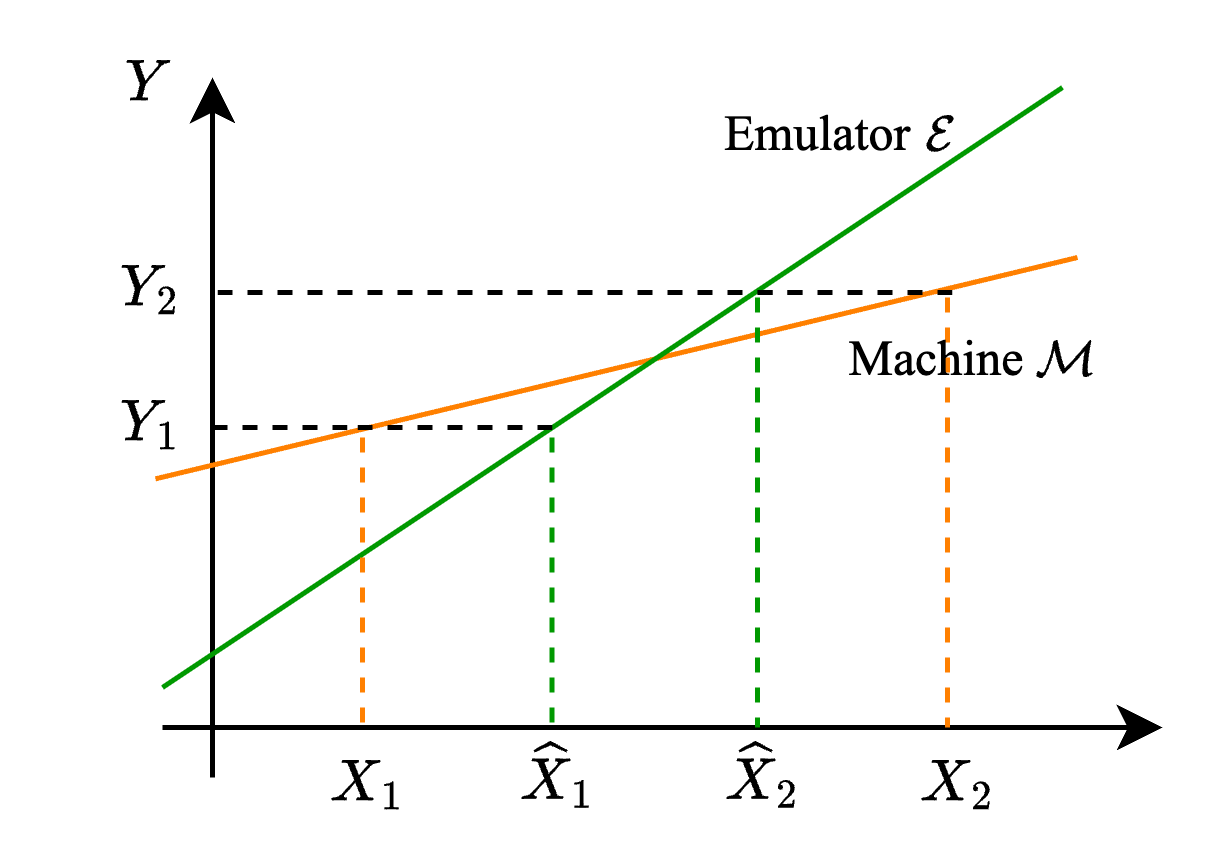}
    }
    \hspace{-1pt}
    \subcaptionbox{Nonlinear Setting.}{
    \includegraphics[width=0.45\linewidth]{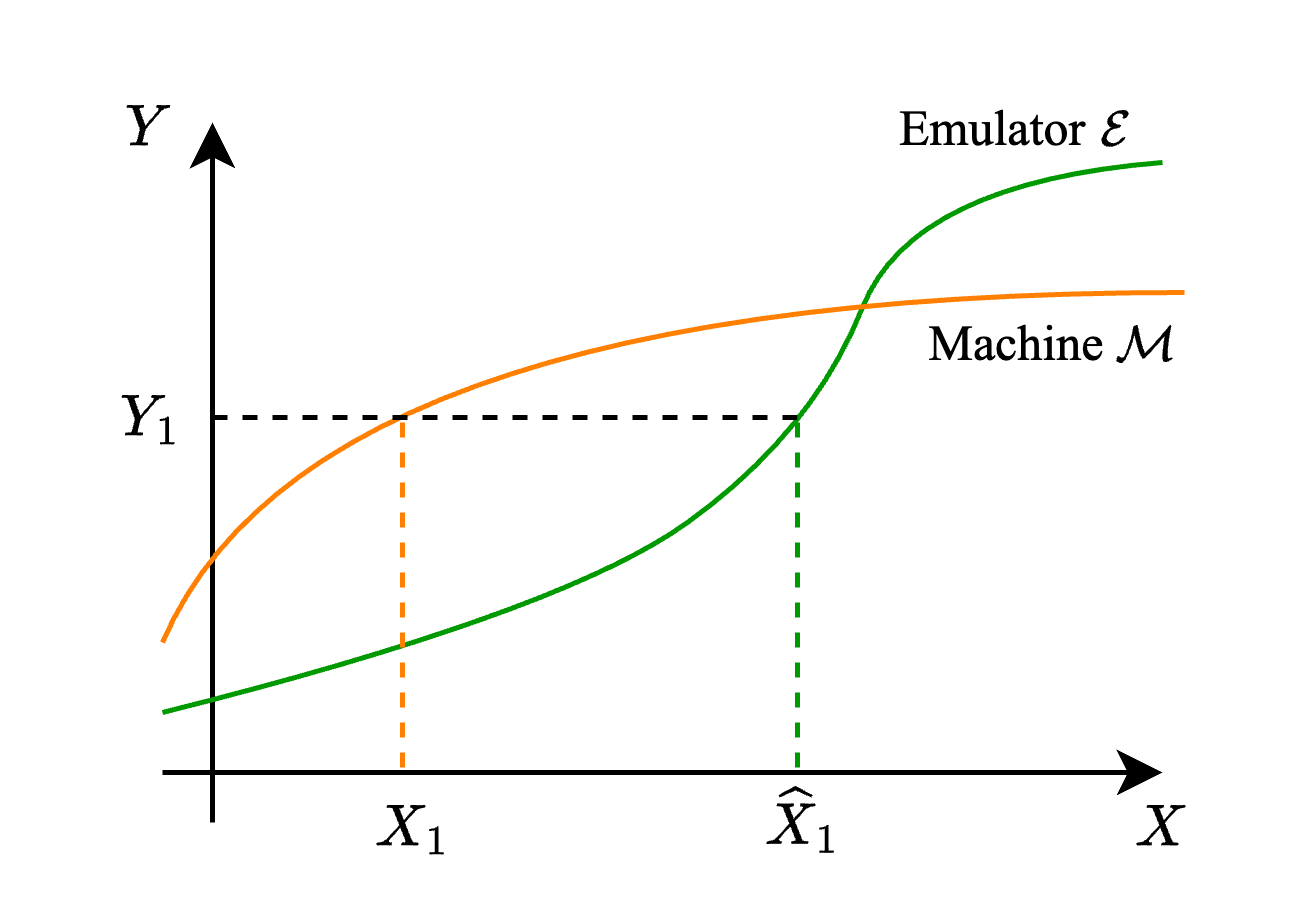}
    } 
\vspace{-0pt}
  \caption{Illustration for the approximation errors of the emulator $\E$. For a given target $Y_i$, $X_i$ denotes the corresponding machine input, and $\widehat{X}_i$ denotes the emulator input.
}
\vspace{-13pt}
\label{fig:linear-nonlinear-settings}
\end{figure}

To address this issue, we introduce Loop B, which further refines the reverse model $\R$ to align with the true machine model $\M$. 
In this phase, $\R$ and $\M$ interact directly through a training protocol mirroring Loop A’s structure.
Particularly, the loss function $\LL(\theta)$ and its gradient are formed by replacing the emulator $\E$ with the true machine $\M$ in Eqs. \eqref{eq: loss_main} and \eqref{eq: grad_main}.
With the pre-training in Loop A, Loop B requires fewer iterations, as $\R$ already achieves a reasonable approximation of the ideal reverse. 
The primary purpose of Loop A is to use the emulator $\E$ to reduce the computational burden associated with direct interactions with $\M$, thereby enhancing efficiency and conserving resources. 
This two-loop framework ensures that $\R$ is both computationally efficient and sufficiently accurate for practical applications, striking a balance between approximation and precision.

\subsection{Conservative Learning}\label{subsec:conservative_learning}
To enhance learning stability, we propose a conservative update mechanism in the later stages of the training process, based on the concept of model sensitivity defined below.

\begin{definition}[Model sensitivity]
The sensitivities of emulator model $\E$ and machine model $\M$ at input $x\in \mathbb{R}^n$ are respectively defined as
\begin{equation}\label{eq:definition-learning-sensitivity}
s_{\E}(x) := \norm{\frac{\partial \E(x)}{\partial x}},\quad s_{\M}(x) := \norm{ \frac{\partial \M(x)}{\partial x} },
\end{equation}
where $\norm{\cdot}$ denotes to the induced $L^2$-norm for matrices.
\end{definition}
Intuitively, $s_{\E}(\cdot)$ and $s_{\M}(\cdot)$ quantify how rapidly the model outputs change in response to variations in the input. 
High sensitivity indicates that even small changes in the input could lead to significant fluctuations in the output.

During the later stages of Loop A (using emulator $\E$) and Loop B (using machine model $\M$), we perform assessments on the model sensitivity before each parameter update to determine the appropriate learning rate. 
If the sensitivity to the given inputs is within the safety range, we still proceed with the standard learning rate $\alpha_1$.
Otherwise, if the sensitivity exceeds a predefined threshold, we adopt a conservative learning rate $\alpha_2$, where $\alpha_2 < \alpha_1$ (see Lines 8–12 and 20–24).

The benefits of this conservative learning paradigm are twofold. 
First, it prevents violent model fluctuations during later training stages, promoting smoother convergence and mitigating instability. 
By avoiding abrupt updates in high-sensitivity regions, it also helps circumvent fluctuations caused by corner cases, where repeated overcorrections could otherwise destabilize the optimization process. 
Second, this paradigm enhances training robustness by avoiding large errors when slight input variations occur in sensitive ranges of the models. 
This is particularly critical in applications like semiconductor manufacturing, where exceeding sensitivity thresholds can lead to significant disruptions. 
By maintaining controlled updates, the conservative approach ensures reliable test-time performance and precision, even in scenarios with tight operational constraints.

\subsection{Practical Algorithm}

In this section, we present the complete algorithm, shown in Algorithm~\ref{alg:algorithm-mfl}, which consists of three key stages.

\begin{itemize}[leftmargin=*]
    \item First, an emulator $\E$ is trained using supervised learning to approximate the true machine model $\M$ (Line 2). 
This step creates a computationally efficient proxy for the machine model, enabling subsequent stages to leverage $\E$ for initial training. 
To improve the robustness of $\E$, we employ the technique of domain randomization.
Specifically, instead of directly using the data pairs $\set{(x_i, z_i)}_{i=1}^n$ during training, we introduce zero-mean Gaussian noise to the inputs $\set{x_i}_{i=1}^n$. 
This deliberate perturbation allows $\E$ to better align with real-world scenarios, where inputs may contain inherent noise, thereby enhancing its stability and ensuring more reliable performance in practical applications.

\item Next, in Loop A (Lines 3–14), the reverse model $\R$ is pre-trained with the emulator $\E$ for $T$ iterations. 
This stage serves as an approximate training phase, efficiently initializing $\R$ by aligning it with $\E$. 
Starting from the $T_0$-th iteration ($T_0 <T)$, the model sensitivity of $\E$ is evaluated at each step to determine the appropriate learning rate. 
If the emulator exhibits high sensitivity to the current inputs, a more conservative learning rate $\alpha_2$ is adopted (Lines 8–12). 
Otherwise, the standard learning rate $\alpha_1$ is used, ensuring a balance between stability and training efficiency.

\item Finally, in Loop B (Lines 15–26), the reverse model $\R$ is further refined by directly interacting with the machine model $\M$ for $\tau$ iterations. 
Similar to Loop A, the conservative learning paradigm is incorporated, with sensitivity assessments determining whether to use the standard learning rate $\alpha_1$ or the conservative learning rate $\alpha_2$ (Lines 20–24). 
Since the pre-training in Loop A effectively aligns $\R$ with the emulator $\E$, the required number of iterations $\tau$ is typically much smaller than $T$, underscoring the computational efficiency and resource-saving benefits of this two-stage training approach.
\end{itemize}

\begin{algorithm}[!htb] 
\caption{\textbf{MFL} -- A Framework of Model Feedback Learning.}
\label{alg:algorithm-mfl}
\begin{algorithmic}[1]
\STATE \textbf{Inputs}: 
machine $\mathcal{M}$; reverse model $\mathcal{R}$ with initial parameter $\theta^0$; emulator training data $\{(x_i,z_i)\}_{i=1}^{n}$; target data $\{z'_j\}_{j=1}^{n^\prime}$ for the new environment; learning rates $\alpha_1$ and $\alpha_2$ with $\alpha_1 > \alpha_2$; periods number $T$, $T_0$, $\tau$, $\tau_0$.
\STATE Construct emulator model $\mathcal{E}$ using $\{(x_i,z_i)\}_{i=1}^{n}$ via supervised learning with domain randomization.
\STATE \textcolor{gray}{ \textbf{// Loop A: Train reverse model $\R_\theta$ using the emulator model $\mathcal{E}$ for $T$ iterations.}}
\FOR{$t = 0, \dots,T-1$} 
\STATE Compute $x'_{t,j}$ = $\R_{\theta^t}(z'_j)$, $\forall j = 1, \ldots, n'$.
\STATE Compute $y_{t,j}' = \mathcal{E}(x_{t,j}')$, $\forall j = 1, \ldots, n'$.
\STATE Compute MSE $\LL(\theta^t)= \frac{1}{n'} \sum_{j=1}^{n'} (z'_j - y_{t,j}')^2$.
\IF{$t\geq T_0$ and $\frac{1}{n^\prime}\sum_{j=1}^{n^\prime} s_{\E}(x^\prime_{t,j})\geq \delta$
}
\STATE Choose conservative learning rate $\alpha^t = \alpha_2$.
\ELSE
\STATE Choose standard learning rate $\alpha^t = \alpha_1$.
\ENDIF
\STATE Update reverse model $\theta^{t+1}=\theta^{t} - \alpha^t\cdot\nabla_{\theta} \LL(\theta^t)$.
\ENDFOR{}
\STATE \textcolor{gray}{\textbf{ // Loop B: Train reverse model $\R_\theta$ using the machine model $\mathcal{M}$ for another $\tau$ iterations.}} 
\FOR{$h = 0, \dots,\tau-1$}
\STATE Compute $x_{h,j}'' = \R_{\theta^{T+h}}(z_j')$, $\forall j = 1, \ldots, n'$.
\STATE Compute $y_{h,j}'' = \mathcal{M}(x_{h,j}'')$, $\forall j = 1, \ldots, n'$.
\STATE Compute MSE $\LL(\theta^{T+h})= \frac{1}{n'} \hspace{-1pt}\sum_{j=1}^{n'} (z'_j - y_{h,j}'')^2$.
\IF{$h \geq \tau_0$ and $\frac{1}{n^{\prime}}\sum_{j=1}^{n^{\prime}} s_{\M}(x^{\prime\prime}_{h,j})\geq \delta$ 
}
\STATE Choose conservative learning rate $\alpha^{T+h} = \alpha_2$.
\ELSE
\STATE Choose standard learning rate $\alpha^{T+h} = \alpha_1$.
\ENDIF
\STATE Update reverse model $\theta^{T+h+1} =\theta^{T+h} - \alpha^{T+h} \cdot\nabla_{\theta} \LL(\theta^{T+h})$.
\ENDFOR{}
\STATE \textbf{Outputs}: Reverse model $\R_{\theta^{T+\tau}}$.
\end{algorithmic}
\end{algorithm}

\begin{remark}[Convergence of Algorithm \ref{alg:algorithm-mfl}]
Under mild regularity conditions, we can demonstrate that Algorithm \ref{alg:algorithm-mfl} converges to a stationary point in the long run.
A more detailed discussion can be found in Appendix \ref{app: convergence}.
\end{remark}

\begin{remark}[Broader application of MFL]
The proposed method provides a versatile framework that encompasses various learning paradigms. While the primary focus of this paper is on the supervised learning setting, the MFL framework can be adapted to scenarios such as online learning, offline learning, and reinforcement learning. 
The exploration of these extensions is deferred to future work.
\end{remark}

\section{Numerical Experiments}
We evaluate the proposed MFL framework for semiconductor manufacturing, specifically focusing on plasma etching recipe generation. Our experiments follow this setup: (1) recipe generation experiments to assess the accuracy performance of our method, (2) benchmarking against state-of-the-art (SOTA) baselines and human expert performance, (3) comparison with other machine learning-based methods, (4) robustness analysis under different conditions, and (5) ablation studies on domain randomization and pre-trained models. Due to the prohibitive costs of real-world fabrication trials, the experiments are conducted in simulation; however, our method can be seamlessly extended to real-world experiments when real-world data becomes available. The experimental settings are provided in Appendix \ref{appendix:experiment-settings}.

\subsection{Recipe Generation}

\begin{table*}[t]
\centering
\vspace{-10pt}
\begin{minipage}[t]{0.3\linewidth}
\centering
\vspace{12pt}
\caption{Comparison of model outputs and targets.}
\hspace{-5pt}
\scriptsize{
\begin{adjustbox}{width=0.9\textwidth,center}
\begin{tabular}{@{}lcc@{}}
\toprule
\textbf{Index} & \textbf{Output} & \textbf{Target} \\
\midrule
Etch depth     & 2255.55 & 2250–2750 \\
Etch rate      & 109.90  & $\geq$100 \\
Mask remain    & 358.95  & $\geq$350 \\
Top CD         & 198.80  & 190–210 \\
$\Delta$CD     & 10.04   & -15–15 \\
Bow CD         & 198.52  & 190–210 \\
\bottomrule
\end{tabular}
\end{adjustbox}
}
\label{table:updated_means_recipe}
\end{minipage}
\hspace{0.04\linewidth}
\begin{minipage}[t]{0.6\linewidth}
\centering 
\caption{Input parameter search ranges. ``SE" denotes senior engineers, and ``JE" represents junior engineers.}
\vspace{-6pt}
\scriptsize{
\begin{adjustbox}{width=0.93\textwidth,center}
\begin{tabular}{@{}lccccc@{}}
\toprule
\textbf{Input Parameters} & \textbf{Unconstrained} & \textbf{SE} & \textbf{JE} & \textbf{Ours} \\
\midrule
Pressure (mT)             & 5 -- 120        & 12 -- 30        & 20 -- 38        & 21.86 \\
Power 1 (W)               & 0 -- 29000      & 4000 -- 15000   & 14000 -- 25000  & 16234.0 \\
Power 2 (W)               & 0 -- 10000      & 1000 -- 7000    & 2000 -- 8000    & 4852.4 \\
Ar Flow (sccm)            & 0 -- 1000       & 100 -- 400      & 300 -- 600      & 253.47 \\
C\textsubscript{4}F\textsubscript{8} Flow & 0 -- 100 & 20 -- 60 & 40 -- 80 & 42.99 \\
C\textsubscript{4}F\textsubscript{6} Flow & 0 -- 100 & 22 -- 66 & 48 -- 96 & 37.40 \\
CH\textsubscript{4} Flow  & 0 -- 20         & 0 -- 5          & 3 -- 8          & 11.60 \\
O\textsubscript{2} Flow   & 0 -- 50         & 20 -- 50        & 20 -- 50        & 18.60 \\
Pulse Duty Cycle (\%)     & 10 -- 100       & 20 -- 60        & 30 -- 70        & 51.46 \\
Pulse Frequency (Hz)      & 500 -- 2000     & 1000            & 1000            & 989.85 \\
Temperature (°C)          & -15 -- 80       & 20 -- 45        & 25 -- 50        & 35.97 \\
\bottomrule
\end{tabular}
\end{adjustbox}
}
\label{table:input_parameters-constraints-sample}
\end{minipage}
\vspace{-10pt}
\end{table*}

In semiconductor manufacturing, recipe accuracy is critical for the etching process, as even a slight deviation in the recipe can result in significant variations in semiconductor fabrication and chip performance. In the experiments, we use a dataset generated from Gaussian sampling, designed to reflect the properties of semiconductor recipes \citep{kanarik2023human}. Our method successfully achieves the targets, enabling precise recipe generation. As shown in Table 1, we compare the outputs of our model with the target ranges \citep{kanarik2023human} for key process metrics, including etch depth, etch rate, mask remaining, top CD, $\Delta$CD, and bow CD. The model outputs align closely with the specified targets, demonstrating the effectiveness of our approach in generating precise and reliable recipes.

Specifically, for etch depth, the model achieves a value of 2255.5466, which is well within the target range of 2250 to 2750. Similarly, the etch rate (109.8992) meets the minimum threshold of $\geq$100, and the mask remaining (358.9485) meets the target of $\geq$350. The top CD and bow CD values (198.7954 and 198.5240, respectively) fall within the range of 190 to 210, and $\Delta$CD (10.043) satisfies the allowable range of -15 to 15. The outputs remain within desired target ranges, as shown in Table \ref{table:updated_means_recipe}, our method achieves accurate results using only \textbf{5 iterations} within the machine loop. Unlike prior approaches that require model retraining or extensive tuning, our approach directly optimizes the recipe in real-time, making it practical for high-throughput semiconductor manufacturing systems. Note, all values are in nanometers (nm), except for the etch rate, which is expressed in nanometers per minute (nm/min). CD refers to the critical dimension, and $\Delta$CD represents the difference between the top and bottom CD of the wafer profile.

These results highlight the accuracy of our model in meeting the semiconductor manufacturing specifications, paving the way for more reliable recipe generation in advanced manufacturing workflows. Our model input also meets the input constraints, which are shown in Table \ref{appendix-table:input_parameters-constraints}, in Appendix \ref{appendix:plasma_etching}.

 \subsection{Comparison with Bayesian Optimization Methods and Human Efforts}

In real-world applications, efficiency is paramount. In our experiments, we compared the efficiency performance of our proposed approach with Bayesian optimization and human-driven strategies for identifying the correct inputs to generate semiconductor recipes.  We present a comparative analysis of the proposed MFL framework, the method introduced by \citep{kanarik2023human} from Lam Research, and traditional human efforts, focusing on the number of iterations required for recipe generation. The Lam Research method incorporates Bayesian optimization and Gaussian approaches with human collaboration. 
Additionally, we conducted experiments using only Bayesian optimization without human efforts for recipe generation and observed that, under the same experimental conditions, it is unable to generate correct recipes.

The experimental results show that MFL significantly outperforms both the Lam Research approach and human-driven methods in terms of efficiency, demonstrating the ability to achieve the correct recipe inputs for optimal semiconductor manufacturing outputs. Specifically, MFL requires only five iterations to generate recipes, significantly fewer than the Lam Research method, which typically requires at least 20 iterations, and far less than the 84 iterations often needed when performed manually by a senior engineer (The results of the Lam Research method and human-driven methods are sourced from \citep{kanarik2023human}). Furthermore, Table~\ref{table:input_parameters-constraints-sample} compares the input parameters generated by our model with those utilized by senior engineers (SE) and junior engineers (JE) \citep{kanarik2023human}. The recipe produced by MFL strictly adheres to the defined constraints, demonstrating its capacity to generate valid and accurate process parameters.

This efficiency highlights the superiority of MFL in accelerating the recipe generation process, indicating its potential to minimize human intervention and reduce computational time. The integration of automated feedback mechanisms in MFL eliminates the reliance on extensive human collaboration, as required in the Lam Research approach while maintaining high accuracy and effectiveness.

\subsection{Comparison with Other Machine Learning Methods}

Next, we evaluate the performance of the MFL framework in semiconductor recipe generation by comparing it with traditional methods, including the \textit{supervised learning approach} \citep{lecun2015deep, oehrlein2024future} and the \textit{Large-Scale Random Search with Local Refinement (LSRS-LR)}. This comparative analysis provides deeper insights into the effectiveness of our proposed method.

\begin{figure}[htbp]
  \centering
  \begin{minipage}[b]{0.47\linewidth}
    \centering
    \includegraphics[width=\linewidth]{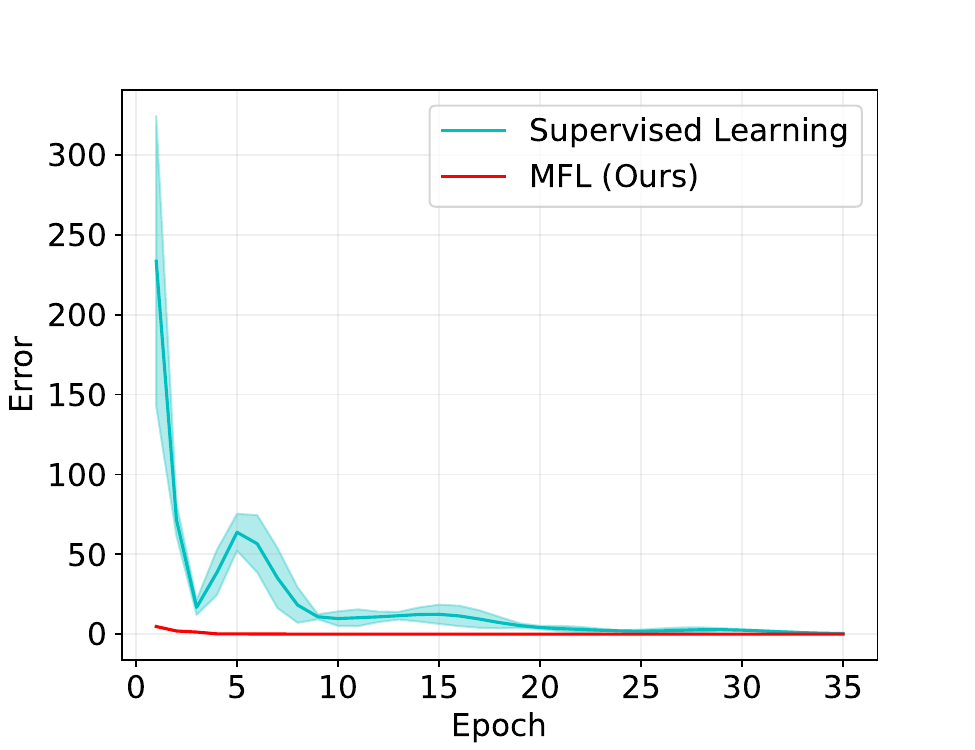}
    \caption{Comparison of MFL and supervised learning in high-dimensional space training, conducted across three random seeds. The y-axis refers to the output error per epoch in a six-dimensional target space.}
    \label{fig:mfl-compared-supervised-learning}
    \vspace{-2pt}
  \end{minipage}
  \hspace{-10pt}
  \hspace{0.05\linewidth}
  \begin{minipage}[b]{0.47\linewidth}
    \centering
    \includegraphics[width=\linewidth]{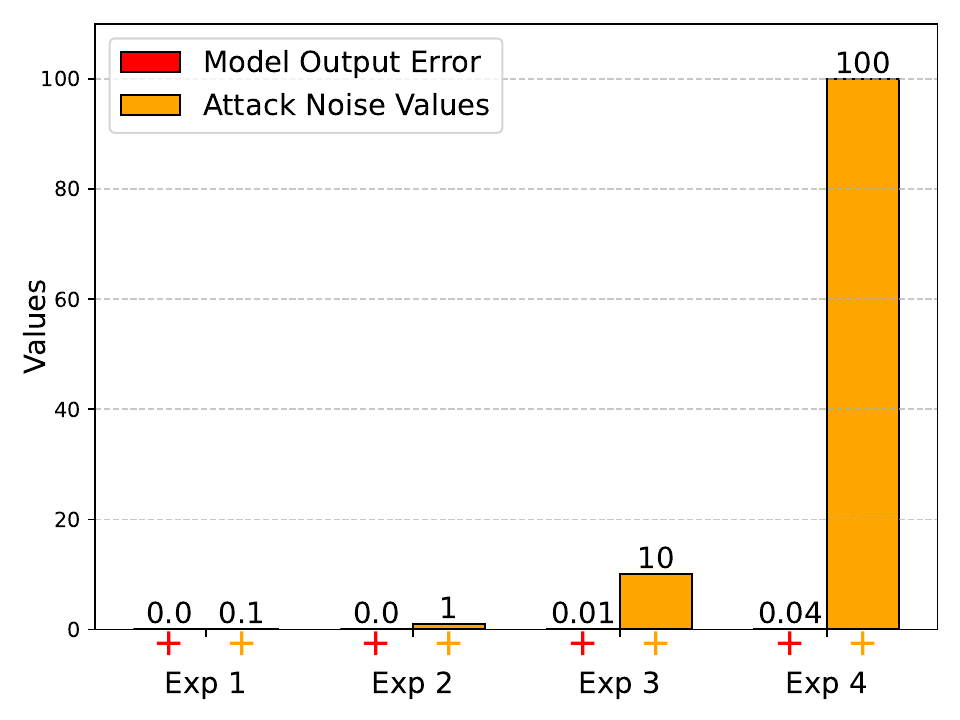}
    \caption{Robust evaluation: comparison of model output errors against different attack noise levels across four experimental scenarios.}
    \label{exp-fig:robust-evaluation}
  \end{minipage}
\end{figure}

As shown in Figure~\ref{fig:mfl-compared-supervised-learning}, we analyze the error reduction patterns during model training, observing clear differences between the two methods. Specifically, the supervised learning approach (indicated in cyan) starts with a high error rate, peaking significantly during the initial epochs. Although the error decreases as training progresses, this reduction is characterized by notable fluctuations. Moreover, the convergence process is gradual, indicating challenges in achieving efficient error minimization within this high-dimensional space. These observations suggest that traditional methods, such as supervised learning, may struggle to adapt effectively in scenarios with high-dimension space. In contrast, the MFL approach (indicated in red) exhibits a stable and consistently low-error trajectory throughout the training process. From the beginning, our method demonstrates its ability to efficiently optimize the model's performance, with minimal error fluctuations. This stability highlights the effectiveness of incorporating model feedback to guide the optimization process, allowing MFL to achieve superior generalization. The results suggest that MFL is particularly well-suited for real-world and high-dimensional tasks where maintaining stability and efficiency is critical.

Furthermore, we develop the LSRS-LR algorithm for semiconductor manufacturing recipe generation and conduct comparative experiments between {MFL} and LSRS-LR. The {LSRS-LR} algorithm optimizes input parameters for a pre-trained emulator model in two stages. First, a \textit{large-scale random search} selects the top $K$ candidates based on MSE loss. Then, \textit{local refinement} iteratively optimizes these candidates while enforcing constraints. This two-stage approach enables the selection of the optimal input $x^*$, making LSRS-LR a viable strategy for complex optimization tasks in semiconductor manufacturing. Experimental results indicate that {MFL} achieves higher accuracy and precision than LSRS-LR in the plasma etching process. For further details on the LSRS-LR algorithm and experimental results, please refer to Algorithm \ref{algorithm:random-search-local-refinement} and Table \ref{tab:etch_data_semiconductor_manufacturing_LSRS-LR} in Appendix \ref{appendix-sec:random-search-optimization}.

Overall, this comparison highlights the significant advantages of MFL over traditional supervised learning and LSRS-LR methods in high-dimensional settings. Particularly, MFL achieves lower error rates, faster convergence, and greater stability, making it ideal for real-world applications that demand dependable model performance under varying conditions.

\subsection{Robustness Evaluation}

\begin{figure}
    \centering
    \subcaptionbox{Minor target shift of 0.1.\label{fig:mfl-sl-target-shift0.1-1-5a}}  
  {\includegraphics[width=0.45\linewidth]{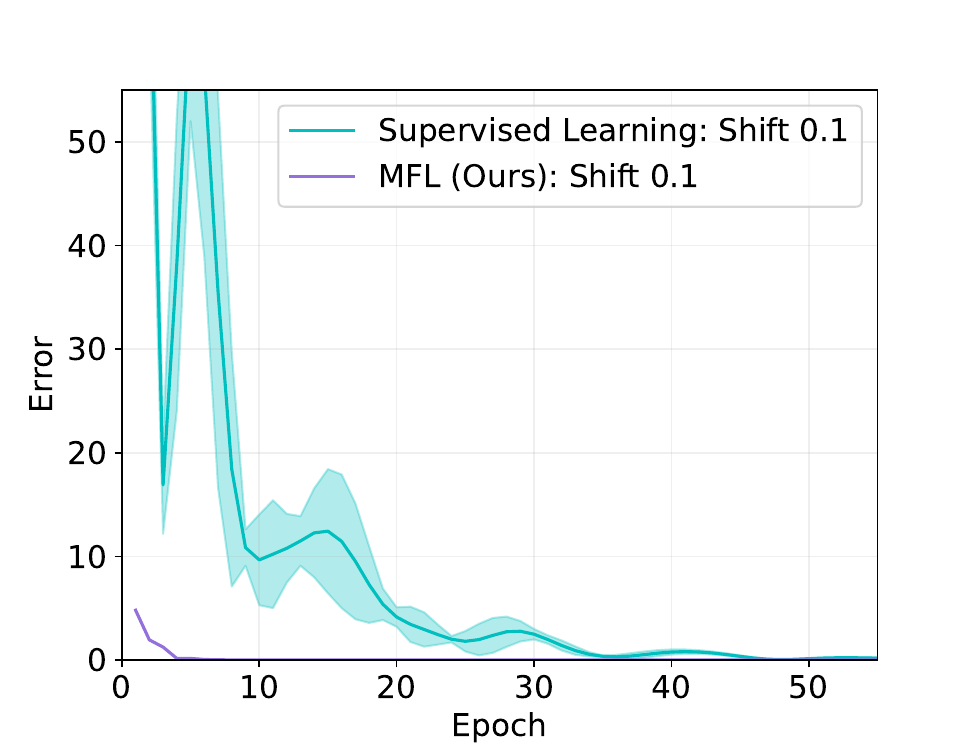}
  }
  \hspace{-10pt}
      \subcaptionbox{Substantial target shift of 5.\label{fig:mfl-sl-target-shift0.1-1-5b}}
  {
     \includegraphics[width=0.45\linewidth]{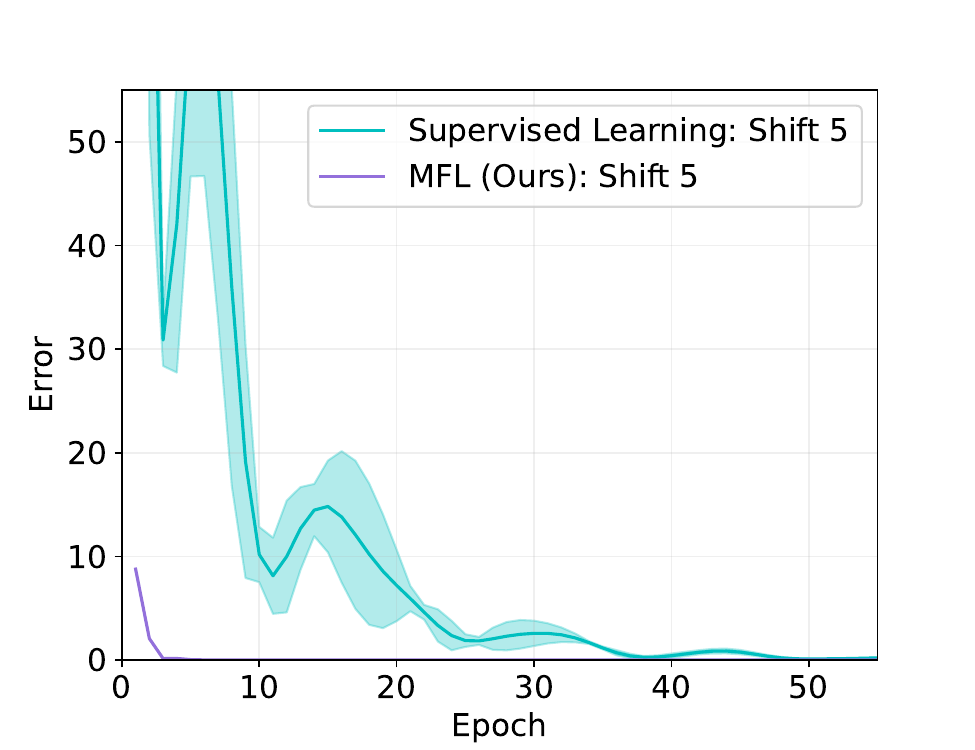}
     } 
 \caption{Robustness evaluation under target shifts.
     }

\label{fig:mfl-sl-target-shift0.1-1-5}
\end{figure}

Robustness is a crucial attribute for deployed models in dynamic environments, where target distributions may shift unpredictably. To evaluate the robustness of MFL, we conducted recipe generation experiments under varying levels of target attack noise. 
As shown in Figure~\ref{exp-fig:robust-evaluation}, our model consistently maintained low output error values across all experiments, even as the magnitude of attack noise increased. Notably, in Experiment 4, where attack noise values escalated significantly to a maximum of 100, the model achieved a remarkably low output error of just 0.04. 
These results demonstrate the model's strong resistance to target noise and its capability to generate accurate outputs under challenging conditions, highlighting its potential for reliable deployment in real-world applications.

In addition, we conducted a robustness evaluation to compare our approach with traditional supervised learning-based recipe generation method. The results, presented in Figure~\ref{fig:mfl-sl-target-shift0.1-1-5}, show the performance of our MFL approach against the supervised learning method across various target shift scenarios in the six-dimensional target space. Specifically, the plots depict the output error per epoch for two target shift conditions: a minor shift of 0.1 (Figure~\ref{fig:mfl-sl-target-shift0.1-1-5a}) and a substantial shift of 5 (Figure~\ref{fig:mfl-sl-target-shift0.1-1-5b}).

In both scenarios, the MFL method (shown in purple) consistently demonstrates rapid convergence with minimal error fluctuations across epochs.
Under the minor target shift condition (Figure~\ref{fig:mfl-sl-target-shift0.1-1-5a}), MFL maintains stability from the initial epochs, effectively mitigating the impact of the shift. 
In contrast, supervised learning (shown in cyan) begins with a higher initial error and converges more gradually, exhibiting notable fluctuations that reflect its sensitivity to even small changes in the target distribution.
For the larger target shift condition (Figure~\ref{fig:mfl-sl-target-shift0.1-1-5b}), MFL again exhibits strong robustness, achieving stability and maintaining a low error rate with minimal variability throughout training. 
Supervised learning, however, starts with a significant error, which persists for multiple epochs before eventually converging. 
The remarkable variability in the supervised learning error trajectory further indicates its difficulty in adapting to substantial shifts.

\begin{table*}[t]
\centering
\vspace{-8pt}
\begin{minipage}[t]{0.43\linewidth}
\centering
\vspace{18pt}
\caption{Chemical Vapor Deposition Target.}
\hspace{-12pt}
\scriptsize{
\begin{adjustbox}{width=0.99\textwidth,center}
\begin{tabular}{lcc}
\hline
{Target Type} & {MFL (Ours)} & {Target Constraints} \\
\hline
Film thickness (center) [nm] & 1047.5 & (100, 2000) \\
Film thickness (edge) [nm] & 1147.3 & (100, 2200) \\
Internal stress [MPa] & 0.047 & (-500, 500) \\
Surface roughness (Ra) [nm] & 5.148 & (0.1, 10) \\
\hline
\end{tabular}
\end{adjustbox}
}
\label{table:chemical-Vapor-Deposition-Target}
\end{minipage}
\hspace{0.04\linewidth}
\begin{minipage}[t]{0.5\linewidth}
\centering 
\caption{Chemical Vapor Deposition Input.}
\vspace{-6pt}
\scriptsize{
\begin{adjustbox}{width=0.93\textwidth,center}
\begin{tabular}{lcc}
\hline
{Input Type} & {MFL (Ours)} & {Input Constraints} \\
\hline
SiH\textsubscript{4} flow rate [sccm] & 317.0320 & (50, 500) \\
NH\textsubscript{3} flow rate [sccm] & 560.5539 & (100, 1000) \\
N\textsubscript{2} flow rate [sccm] & 1288.5685 & (200, 2000) \\
Chamber temperature [°C] & 541.5430 & (300, 750) \\
Chamber pressure [Torr] & 5.1651 & (1, 10) \\
Chamber humidity [\%RH] & 24.6166 & (5, 40) \\
Electrode distance [mm] & 16.7863 & (10, 30) \\
Pre-clean plasma power [W] & 146.6214 & (0, 300) \\
Pre-clean duration [s] & 36.2421 & (0, 60) \\
Wafer rotation speed [rpm] & 1906.4441 & (0, 3000) \\
Process time [s] & 5.05 & 144.5516 \\
\hline
\end{tabular}
\end{adjustbox}
}
\label{table:chemical-Vapor-Deposition-input}
\end{minipage}
\end{table*}

These results highlight MFL’s ability to deliver a stable and efficient learning process across varying target shift levels in high-dimensional space learning. This robustness makes MFL particularly valuable for real-world applications where resilience to evolving data distributions is critical.

\subsection{Beyond Semiconductor Manufacturing}

To further evaluate the generality of our method, we extended our experiments beyond semiconductor recipe generation to two additional manufacturing tasks:  CVD \citep{jones2009chemical, sun2021chemical}\footnote{\url{https://en.wikipedia.org/wiki/Chemical_vapor_deposition}} and wire bonding \citep{chauhan2014copper, zhong2011overview}\footnote{\url{https://en.wikipedia.org/wiki/Wire_bonding}}. In both instances, MFL demonstrated strong performance and high efficiency, requiring only 5 iterations to converge for CVD and 9 for wire bonding. 
Importantly, MFL consistently satisfied all domain-specific constraints on both inputs and targets in our experiment settings, as shown for CVD in Tables~\ref{table:chemical-Vapor-Deposition-Target} and \ref{table:chemical-Vapor-Deposition-input}, and for wire bonding in the supplementary tables in Appendix~\ref{appendix:applications-beyond-semicondutor}. These results highlight the broad applicability of the MFL framework across diverse manufacturing domains.

\begin{figure}[htbp]
  \centering  
  \begin{minipage}[t]{0.435\linewidth}
    \centering        
    \includegraphics[width=\linewidth]{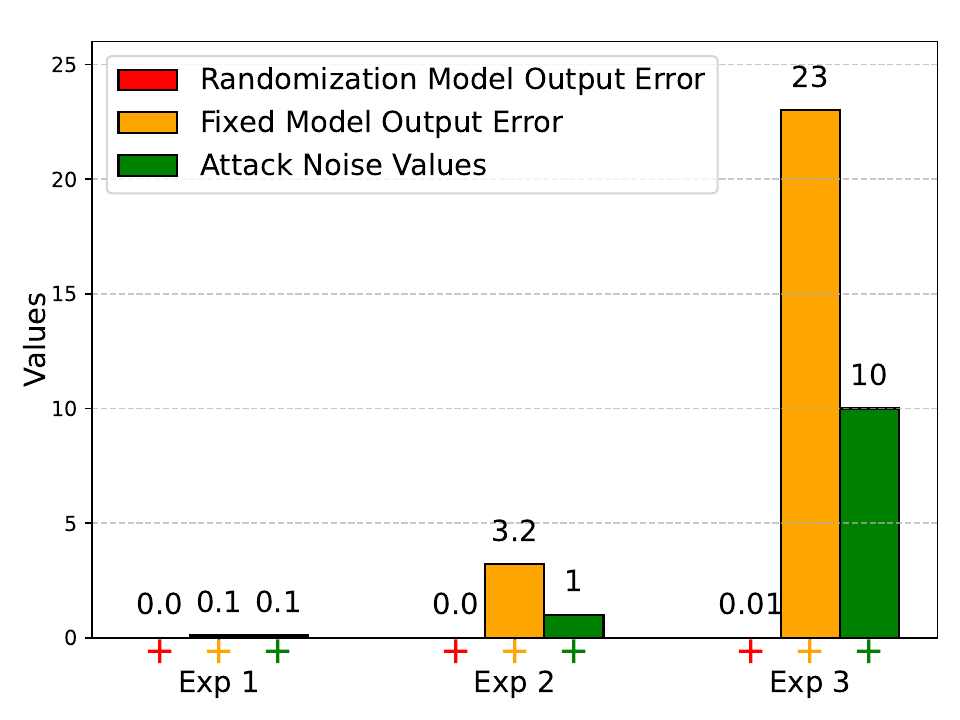}
    \vspace{2pt}
    \caption{\strut Domain randomization ablation: Evaluating the impact of domain randomization on model output error across different attack noise levels.}
    \label{exp-fig:robust-evaluation-ablation-randomization}
  \end{minipage}
  \hspace{0.05\linewidth}   
  \begin{minipage}[t]{0.455\linewidth}  
    \centering
    \includegraphics[width=\linewidth]{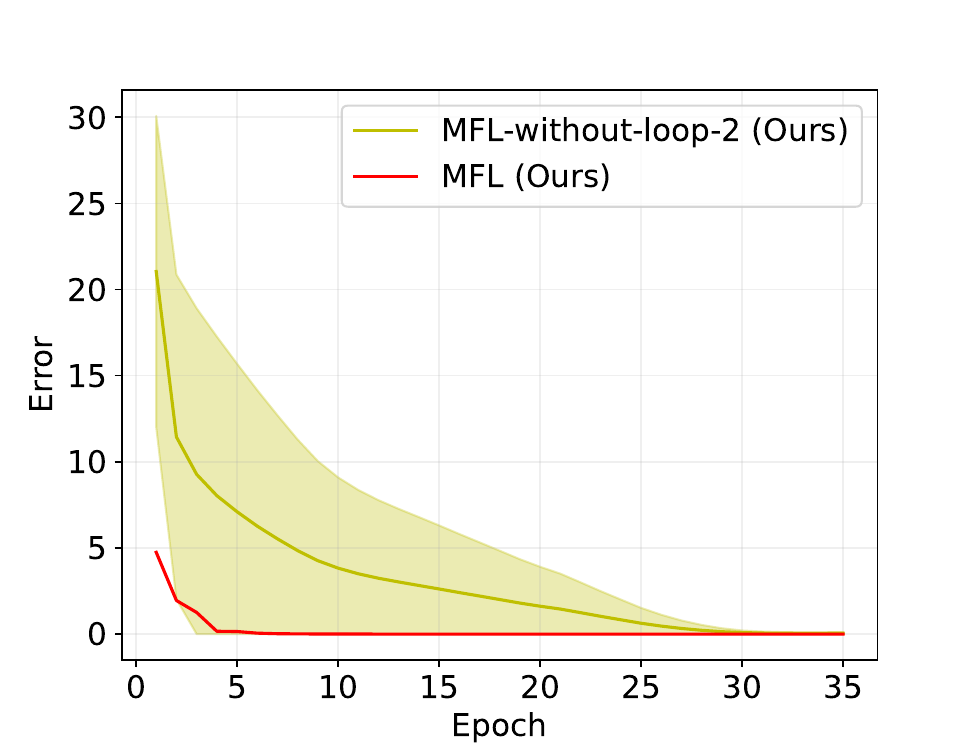}
    \vspace{2pt}
    \caption{\strut Comparison between MFL and MFL without loop 2 training.}
    \label{fig:mfl-compared-with-without-r-loop-2}
  \end{minipage}
\end{figure}

\subsection{Ablation Experiments}
To comprehensively evaluate the performance of our method, we conduct ablation experiments on domain randomization and model pre-training, analyzing the impact of each factor on the overall performance.

\paragraph{Domain Randomization} 
We perform ablation experiments comparing fixed input training with domain-randomized input training to assess robustness under varying attack noise levels. 
Figure~\ref{exp-fig:robust-evaluation-ablation-randomization} presents the results, where the experiments compare three metrics: randomization model output error (red), fixed model output error (orange), and attack noise values (green) across three scenarios (Exp 1, Exp 2, and Exp 3). 
Notably, domain randomization consistently reduces model output error and enhances robustness, particularly under higher noise attack conditions. For instance, in Exp 3, the randomization model achieves significantly lower error compared to the fixed model. These findings underscore the effectiveness of domain randomization in improving model robustness in noisy environments.

\paragraph{Effect of Loop 2 Training in MFL} To assess the effectiveness of loop 2 training in the MFL framework, we conducted an ablation study comparing the model with a variant that omits loop 2. As shown in Figure~\ref{fig:mfl-compared-with-without-r-loop-2}, removing loop 2 significantly degrades performance. The MFL method (red curve) achieves faster and more stable convergence with substantially lower error across training epochs, while the MFL variant without loop 2 (yellow curve) converges more slowly and exhibits higher error. The shaded region denotes standard deviation across multiple runs, further highlighting the improved stability of the complete MFL approach. These results demonstrate that loop 2 training plays a critical role in reducing prediction error and enhancing model robustness.

\paragraph{Comparing Models with and without Pre-training} 

To evaluate the impact of pre-training in Loop A on model performance, we conduct ablation experiments comparing models with pre-training (i.e., including Loop A) and without pre-training (i.e., excluding Loop A), focusing on their training output errors. 
Figure~\ref{fig:mfl-compared-with-without-r} presents the performance of our MFL approach under different training conditions in a six-dimensional target space.
Specifically, Figure~\ref{fig:mfl-compared-with-without-r_a} compares the training error per epoch for MFL with and without pre-training. 
The results show that pre-training significantly accelerates convergence, as evidenced by the consistently lower errors throughout the training process and the reduced variability. 
Notably, even without pre-training, the MFL method demonstrates a clear advantage over the traditional supervised learning approach, achieving lower errors and greater stability, as shown in Figure~\ref{fig:mfl-compared-with-without-r_b}.

 \begin{figure}[htbp!]
 \centering
 \subcaptionbox{\label{fig:mfl-compared-with-without-r_a}}
  {
\includegraphics[width=0.47\linewidth]{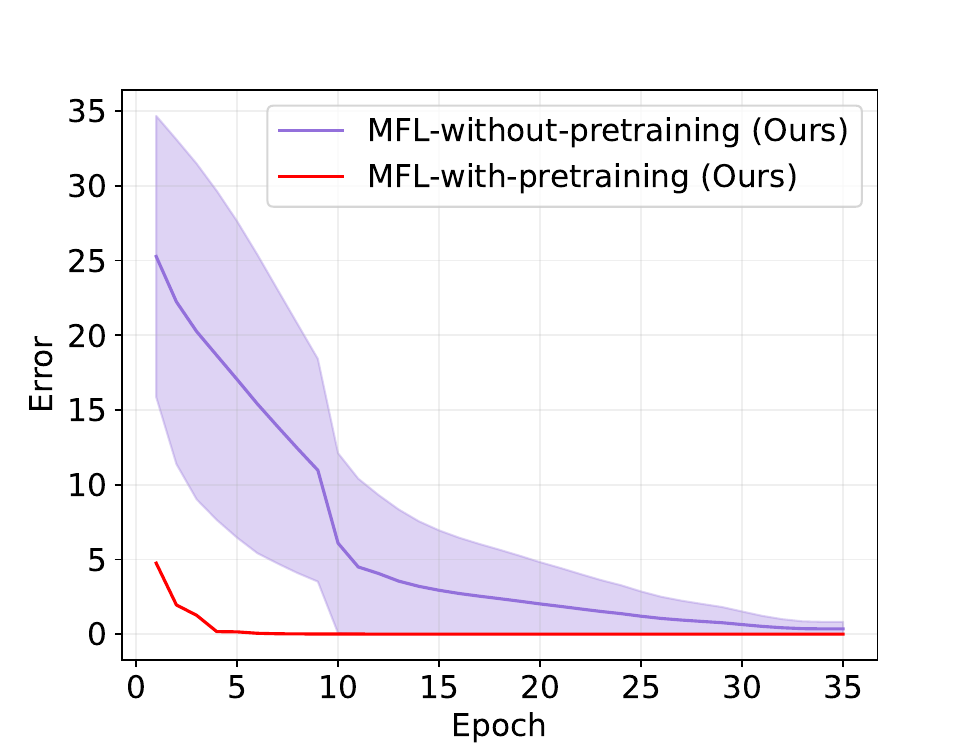}
}    
\subcaptionbox{\label{fig:mfl-compared-with-without-r_b}}
  {
\includegraphics[width=0.47\linewidth]{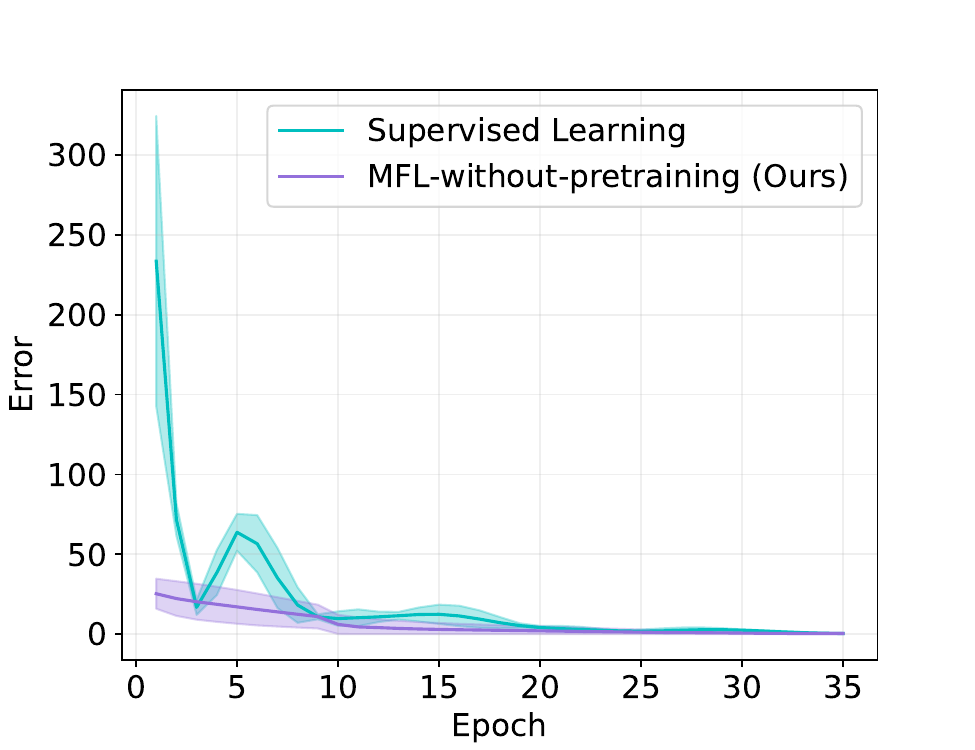}
}   
 	\caption{\normalsize MFL without pre-training is compared to MFL with pre-training (a) and the supervised learning method (b).
 	} 
  \label{fig:mfl-compared-with-without-r}
 \end{figure} 
\section{Conclusion}

In this study, we introduced MFL, a test-time optimization framework designed for recipe generation in semiconductor manufacturing. MFL employs a lightweight reverse emulator to efficiently identify optimal input values for deployed models, such as physical or foundation models, where retraining is infeasible or prohibitively expensive. This enables the generation of outputs that closely match desired targets while maintaining computational efficiency. To validate the effectiveness of the MFL framework, we conducted comprehensive experiments, including robustness evaluations, ablation studies, and comparisons against SOTA baselines and human experts. We also demonstrated the generality of our approach through applications beyond semiconductor manufacturing, including chemical and electronic domains. Experimental results show that MFL consistently outperforms both SOTA methods and human-driven approaches, highlighting its potential to improve accuracy and efficiency in diverse applications. Particularly, MFL offers a promising foundation for advancing the development of next-generation semiconductor technologies—critical enablers for future innovations in AI and other emerging fields.

\bibliography{mfl}

\begin{thebibliography}{10}

\bibitem{achiam2023gpt}
Josh Achiam, Steven Adler, Sandhini Agarwal, Lama Ahmad, Ilge Akkaya, Florencia~Leoni Aleman, Diogo Almeida, Janko Altenschmidt, Sam Altman, Shyamal Anadkat, et~al.
\newblock Gpt-4 technical report.
\newblock {\em arXiv preprint arXiv:2303.08774}, 2023.

\bibitem{akyurek2024surprising}
Ekin Aky{\"u}rek, Mehul Damani, Adam Zweiger, Linlu Qiu, Han Guo, Jyothish Pari, Yoon Kim, and Jacob Andreas.
\newblock The surprising effectiveness of test-time training for few-shot learning.
\newblock {\em arXiv preprint arXiv:2411.07279}, 2024.

\bibitem{ardizzone2018analyzing}
Lynton Ardizzone, Jakob Kruse, Carsten Rother, and Ullrich Köthe.
\newblock Analyzing inverse problems with invertible neural networks.
\newblock In {\em International Conference on Learning Representations}, 2019.

\bibitem{bartler2022mt3}
Alexander Bartler, Andre B{\"u}hler, Felix Wiewel, Mario D{\"o}bler, and Bin Yang.
\newblock Mt3: Meta test-time training for self-supervised test-time adaption.
\newblock In {\em International Conference on Artificial Intelligence and Statistics}, pages 3080--3090. PMLR, 2022.

\bibitem{chang2013novel}
Kyu-Baik Chang, Yun~Young Kim, Jiwoong Sue, Hojoon Lee, Won-Young Chung, Keun-Ho Lee, Young-Kwan Park, EunSeung Jung, and Ilsub Chung.
\newblock The novel stress simulation method for contemporary dram capacitor arrays.
\newblock In {\em 2013 International Conference on Simulation of Semiconductor Processes and Devices (SISPAD)}, pages 424--427. IEEE, 2013.

\bibitem{chauhan2014copper}
Preeti~S Chauhan, Anupam Choubey, ZhaoWei Zhong, Michael~G Pecht, Preeti~S Chauhan, Anupam Choubey, ZhaoWei Zhong, and Michael~G Pecht.
\newblock {\em Copper wire bonding}.
\newblock Springer, 2014.

\bibitem{chen2020etch}
Rui Chen, Haoru Hu, Xiaoting Li, Ying Chen, SU~Xiaojing, Lisong Dong, Lei Qu, Chen Li, Jiang Yan, and Yayi Wei.
\newblock Etch model based on machine learning.
\newblock In {\em 2020 China Semiconductor Technology International Conference (CSTIC)}, pages 1--4. IEEE, 2020.

\bibitem{chen2024exploring}
Ying-Lin Chen, Sara Sacchi, Bappaditya Dey, Victor Blanco, Sandip Halder, Philippe Leray, and Stefan De~Gendt.
\newblock Exploring machine learning for semiconductor process optimization: a systematic review.
\newblock {\em IEEE Transactions on Artificial Intelligence}, 2024.

\bibitem{frazier2018tutorial}
Peter~I Frazier.
\newblock A tutorial on bayesian optimization.
\newblock {\em arXiv preprint arXiv:1807.02811}, 2018.

\bibitem{gandelsman2022test}
Yossi Gandelsman, Yu~Sun, Xinlei Chen, and Alexei Efros.
\newblock Test-time training with masked autoencoders.
\newblock {\em Advances in Neural Information Processing Systems}, 35:29374--29385, 2022.

\bibitem{garcia1982internal}
Carlos~E Garcia and Manfred Morari.
\newblock Internal model control. a unifying review and some new results.
\newblock {\em Industrial \& Engineering Chemistry Process Design and Development}, 21(2):308--323, 1982.

\bibitem{grigorescu2020survey}
Sorin Grigorescu, Bogdan Trasnea, Tiberiu Cocias, and Gigel Macesanu.
\newblock A survey of deep learning techniques for autonomous driving.
\newblock {\em Journal of field robotics}, 37(3):362--386, 2020.

\bibitem{gu2023safe}
Shangding Gu, Jakub~Grudzien Kuba, Yuanpei Chen, Yali Du, Long Yang, Alois Knoll, and Yaodong Yang.
\newblock Safe multi-agent reinforcement learning for multi-robot control.
\newblock {\em Artificial Intelligence}, 319:103905, 2023.

\bibitem{hosseinpour2024novel}
Fatemeh Hosseinpour, Ibrahim Ahmed, Piero Baraldi, Enrico Zio, Mehdi Behzad, and Horst Lewitschnig.
\newblock A novel methodology based on long short-term memory stacked autoencoders for unsupervised detection of abnormal working conditions in semiconductor manufacturing systems.
\newblock {\em Proceedings of the Institution of Mechanical Engineers, Part O: Journal of Risk and Reliability}, page 1748006X241296423, 2024.

\bibitem{jeong2021bridging}
Changwook Jeong, Sanghoon Myung, In~Huh, Byungseon Choi, Jinwoo Kim, Hyunjae Jang, Hojoon Lee, Daeyoung Park, Kyuhun Lee, Wonik Jang, et~al.
\newblock Bridging tcad and ai: Its application to semiconductor design.
\newblock {\em IEEE Transactions on Electron Devices}, 68(11):5364--5371, 2021.

\bibitem{jones2009chemical}
Anthony~C Jones and Michael~L Hitchman.
\newblock {\em Chemical vapour deposition: precursors, processes and applications}.
\newblock Royal society of chemistry, 2009.

\bibitem{jumper2021highly}
John Jumper, Richard Evans, Alexander Pritzel, Tim Green, Michael Figurnov, Olaf Ronneberger, Kathryn Tunyasuvunakool, Russ Bates, Augustin {\v{Z}}{\'\i}dek, Anna Potapenko, et~al.
\newblock Highly accurate protein structure prediction with alphafold.
\newblock {\em nature}, 596(7873):583--589, 2021.

\bibitem{kanarik2023human}
Keren~J Kanarik, Wojciech~T Osowiecki, Yu~Lu, Dipongkar Talukder, Niklas Roschewsky, Sae~Na Park, Mattan Kamon, David~M Fried, and Richard~A Gottscho.
\newblock Human--machine collaboration for improving semiconductor process development.
\newblock {\em Nature}, 616(7958):707--711, 2023.

\bibitem{khan2020ai}
Saif~M Khan and Alexander Mann.
\newblock Ai chips: what they are and why they matter.
\newblock {\em Center for Security and Emerging Technology}, 2020.

\bibitem{kim2005prediction}
Byungwhan Kim and Byung~Teak Lee.
\newblock Prediction of silicon oxynitride plasma etching using a generalized regression neural network.
\newblock {\em Journal of applied physics}, 98(3), 2005.

\bibitem{kim2024application}
Gwanjoong Kim, Ji-Won Kwon, Ingyu Lee, Hwiwon Seo, Jong-Bae Park, Jong-Hyun Shin, and Gon-Ho Kim.
\newblock Application of plasma information-based virtual metrology (pi-vm) for etching in c4f8/ar/o2 plasma.
\newblock {\em IEEE Transactions on Semiconductor Manufacturing}, 2024.

\bibitem{kroemer2021review}
Oliver Kroemer, Scott Niekum, and George Konidaris.
\newblock A review of robot learning for manipulation: Challenges, representations, and algorithms.
\newblock {\em Journal of machine learning research}, 22(30):1--82, 2021.

\bibitem{kumar2024robohive}
Vikash Kumar, Rutav Shah, Gaoyue Zhou, Vincent Moens, Vittorio Caggiano, Abhishek Gupta, and Aravind Rajeswaran.
\newblock Robohive: A unified framework for robot learning.
\newblock {\em Advances in Neural Information Processing Systems}, 36, 2024.

\bibitem{lecun2015deep}
Yann LeCun, Yoshua Bengio, and Geoffrey Hinton.
\newblock Deep learning.
\newblock {\em nature}, 521(7553):436--444, 2015.

\bibitem{li2018nett}
Housen Li, Johannes Schwab, Stephan Antholzer, and Markus Haltmeier.
\newblock Nett: Solving inverse problems with deep neural networks.
\newblock {\em arXiv preprint arXiv:1803.00092}, 2018.

\bibitem{li2012brief}
Jing Li, Ji-hang Cheng, Jing-yuan Shi, and Fei Huang.
\newblock Brief introduction of back propagation (bp) neural network algorithm and its improvement.
\newblock In {\em Advances in Computer Science and Information Engineering: Volume 2}, pages 553--558. Springer, 2012.

\bibitem{liu2021ttt}
Yuejiang Liu, Parth Kothari, Bastien Van~Delft, Baptiste Bellot-Gurlet, Taylor Mordan, and Alexandre Alahi.
\newblock Ttt++: When does self-supervised test-time training fail or thrive?
\newblock {\em Advances in Neural Information Processing Systems}, 34:21808--21820, 2021.

\bibitem{ma2017fast}
Xu~Ma, Shangliang Jiang, Jie Wang, Bingliang Wu, Zhiyang Song, and Yanqiu Li.
\newblock A fast and manufacture-friendly optical proximity correction based on machine learning.
\newblock {\em Microelectronic Engineering}, 168:15--26, 2017.

\bibitem{ma2014fast}
Xu~Ma, Bingliang Wu, Zhiyang Song, Shangliang Jiang, and Yanqiu Li.
\newblock Fast pixel-based optical proximity correction based on nonparametric kernel regression.
\newblock {\em Journal of Micro/Nanolithography, MEMS, and MOEMS}, 13(4):043007--043007, 2014.

\bibitem{may2006fundamentals}
Gary~S May and Costas~J Spanos.
\newblock {\em Fundamentals of semiconductor manufacturing and process control}.
\newblock John Wiley \& Sons, 2006.

\bibitem{monahan2005enabling}
Kevin~M Monahan.
\newblock Enabling dfm and apc strategies at the 32 nm technology node.
\newblock In {\em ISSM 2005, IEEE International Symposium on Semiconductor Manufacturing, 2005.}, pages 398--401. IEEE, 2005.

\bibitem{myung2021novel}
Sanghoon Myung, Hyunjae Jang, Byungseon Choi, Jisu Ryu, Hyuk Kim, Sang~Wuk Park, Changwook Jeong, and Dae~Sin Kim.
\newblock A novel approach for semiconductor etching process with inductive biases.
\newblock {\em arXiv preprint arXiv:2104.02468}, 2021.

\bibitem{nicole2011modeling}
Ping Nicole~An and Paul~A Kohl.
\newblock Modeling simplification for thermal mechanical analysis of high density chip-to-substrate connections.
\newblock {\em Journal of electronic packaging}, 133(4), 2011.

\bibitem{oehrlein2024future}
Gottlieb~S Oehrlein, Stephan~M Brandstadter, Robert~L Bruce, Jane~P Chang, Jessica~C DeMott, Vincent~M Donnelly, R{\'e}mi Dussart, Andreas Fischer, Richard~A Gottscho, Satoshi Hamaguchi, et~al.
\newblock Future of plasma etching for microelectronics: Challenges and opportunities.
\newblock {\em Journal of Vacuum Science \& Technology B}, 42(4), 2024.

\bibitem{ouyang2022training}
Long Ouyang, Jeffrey Wu, Xu~Jiang, Diogo Almeida, Carroll Wainwright, Pamela Mishkin, Chong Zhang, Sandhini Agarwal, Katarina Slama, Alex Ray, et~al.
\newblock Training language models to follow instructions with human feedback.
\newblock {\em Advances in neural information processing systems}, 35:27730--27744, 2022.

\bibitem{park2024damage}
Junyoung Park, Jiwon Jung, Min-Seok Kim, Chang-Min Lim, Jung-Eun Choi, Nayeon Kim, Ju-Ho Kim, and Chin-Wook Chung.
\newblock Damage-free plasma source for atomic-scale processing.
\newblock {\em Nano Letters}, 24(37):11462--11468, 2024.

\bibitem{park2024plasma}
Seolhye Park, Yoona Park, Jaegu Seong, Haneul Lee, Namjae Bae, Ki-baek Roh, Rabul Seo, Bongsub Song, and Gon-Ho Kim.
\newblock Plasma heating characterization of the large area inductively coupled plasma etchers with the plasma information for managing the mass production.
\newblock {\em Physics of Plasmas}, 31(7), 2024.

\bibitem{sawlani2024perspectives}
Kapil Sawlani and Ali Mesbah.
\newblock Perspectives on artificial intelligence for plasma-assisted manufacturing in semiconductor industry.
\newblock In {\em Artificial Intelligence in Manufacturing}, pages 97--138. Elsevier, 2024.

\bibitem{shearn2010advanced}
Michael Shearn, Xiankai Sun, M~David Henry, Amnon Yariv, and Axel Scherer.
\newblock Advanced plasma processing: etching, deposition, and wafer bonding techniques for semiconductor applications.
\newblock {\em Semiconductor technologies}, 27:81--105, 2010.

\bibitem{shen2017three}
Wen-Wei Shen and Kuan-Neng Chen.
\newblock Three-dimensional integrated circuit (3d ic) key technology: Through-silicon via (tsv).
\newblock {\em Nanoscale research letters}, 12:1--9, 2017.

\bibitem{shim2016machine}
Seongbo Shim and Youngsoo Shin.
\newblock Machine learning-guided etch proximity correction.
\newblock {\em IEEE Transactions on Semiconductor Manufacturing}, 30(1):1--7, 2016.

\bibitem{silver2016mastering}
David Silver, Aja Huang, Chris~J Maddison, Arthur Guez, Laurent Sifre, George Van Den~Driessche, Julian Schrittwieser, Ioannis Antonoglou, Veda Panneershelvam, Marc Lanctot, et~al.
\newblock Mastering the game of go with deep neural networks and tree search.
\newblock {\em nature}, 529(7587):484--489, 2016.

\bibitem{sinha2023test}
Samarth Sinha, Peter Gehler, Francesco Locatello, and Bernt Schiele.
\newblock Test: Test-time self-training under distribution shift.
\newblock In {\em Proceedings of the IEEE/CVF Winter Conference on Applications of Computer Vision}, pages 2759--2769, 2023.

\bibitem{sun2021chemical}
Luzhao Sun, Guowen Yuan, Libo Gao, Jieun Yang, Manish Chhowalla, Meysam~Heydari Gharahcheshmeh, Karen~K Gleason, Yong~Seok Choi, Byung~Hee Hong, and Zhongfan Liu.
\newblock Chemical vapour deposition.
\newblock {\em Nature Reviews Methods Primers}, 1(1):5, 2021.

\bibitem{sun2020test}
Yu~Sun, Xiaolong Wang, Zhuang Liu, John Miller, Alexei Efros, and Moritz Hardt.
\newblock Test-time training with self-supervision for generalization under distribution shifts.
\newblock In {\em International conference on machine learning}, pages 9229--9248. PMLR, 2020.

\bibitem{wang2009modeling}
Chi-Chao Wang, Wei Zhao, Frank Liu, Min Chen, and Yu~Cao.
\newblock Modeling of layout-dependent stress effect in cmos design.
\newblock In {\em Proceedings of the 2009 International Conference on Computer-Aided Design}, pages 513--520, 2009.

\bibitem{wang2021tent}
Dequan Wang, Evan Shelhamer, Shaoteng Liu, Bruno Olshausen, and Trevor Darrell.
\newblock Tent: Fully test-time adaptation by entropy minimization.
\newblock In {\em International Conference on Learning Representations}, 2021.

\bibitem{wang2024data}
Henrik Wang, Feiyang Ou, Julius Suherman, Matthew Tom, Gerassimos Orkoulas, and Panagiotis~D Christofides.
\newblock Data-driven machine learning predictor model for optimal operation of a thermal atomic layer etching reactor.
\newblock {\em Industrial \& Engineering Chemistry Research}, 63(45):19693--19706, 2024.

\bibitem{waswani2017attention}
A~Waswani, N~Shazeer, N~Parmar, J~Uszkoreit, L~Jones, A~Gomez, L~Kaiser, and I~Polosukhin.
\newblock Attention is all you need.
\newblock In {\em NIPS}, 2017.

\bibitem{wei2009advanced}
Yayi Wei and Robert~L Brainard.
\newblock {\em Advanced processes for 193-nm immersion lithography}, volume 189.
\newblock SPIE press, 2009.

\bibitem{xiao2021multiscale}
Tianqi Xiao and Dong Ni.
\newblock Multiscale modeling and recurrent neural network based optimization of a plasma etch process.
\newblock {\em Processes}, 9(1):151, 2021.

\bibitem{yao2025etching}
Zhenjie Yao, Ziyi Hu, Panpan Lai, Fengling Qin, Wenrui Wang, Zhicheng Wu, Lingfei Wang, Hua Shao, Yongfu Li, Zhiqiang Li, et~al.
\newblock Etching process prediction based on cascade recurrent neural network.
\newblock {\em Engineering Applications of Artificial Intelligence}, 139:109590, 2025.

\bibitem{zhao2024autonomous}
Jingyuan Zhao, Wenyi Zhao, Bo~Deng, Zhenghong Wang, Feng Zhang, Wenxiang Zheng, Wanke Cao, Jinrui Nan, Yubo Lian, and Andrew~F Burke.
\newblock Autonomous driving system: A comprehensive survey.
\newblock {\em Expert Systems with Applications}, 242:122836, 2024.

\bibitem{zhong2011overview}
ZW~Zhong.
\newblock Overview of wire bonding using copper wire or insulated wire.
\newblock {\em Microelectronics Reliability}, 51(1):4--12, 2011.

\end{thebibliography}
\bibliographystyle{plain}

\newpage
\appendix

\textbf{\Large Appendix}

\section{More Details on Plasma Etching Process in Semiconductor Manufacturing}\label{appendix:plasma_etching}

Modern computer chips feature complex circuit patterns that conduct electricity through millions and even billions of transistors and other types of devices. Some transistors are no larger than tens of silicon atoms. 
To create circuit patterns on nanometer scales, a precise process that removes silicon and other materials from the wafer surface is typically achieved with plasma etching. 
Plasma etching is a critical process in semiconductor device manufacturing. The method uses plasma to selectively remove material from a substrate's surface. This process is essential for creating precise, high-resolution features on semiconductor wafers.
A plasma etching process involves the generation of plasma in a specially designed chamber, where certain chemically reactive species and ions are generated, and the etching of the wafer according to lithographical patterns.
Physical-chemical reactions in the etching process need to be accurately tuned and precisely controlled to form on-wafer structures with specific depths and shapes.

Figure~\ref{fig:semiconductor-manufacturing-overview-nature} provides a schematic overview of the plasma etching process, a critical step in semiconductor manufacturing that demands nanometer-scale precision. The process involves optimizing a set of recipe parameters to control the etching results and ensure profile fidelity under stringent constraints.

\textbf{Incoming Photoresist Mask:} The leftmost panel of Figure~\ref{fig:semiconductor-manufacturing-overview-nature} depicts the wafer structure prior to etching. It includes an oxide layer covered by a patterned photoresist mask with a thickness of approximately 750 nm. The target feature has a critical dimension (CD) of 200 nm, emphasizing the precision required during etching.

\textbf{Recipe Parameters:} The central portion of the figure outlines the key input variables for the plasma etching process:
\begin{itemize}[leftmargin=*]
    \item \textbf{Pressure and Gas Flows:} Controlled flows of gases such as \textit{Ar}, \textit{${C}_4{F}_6$}, \textit{${C}_4{F}_8$}, \textit{${CH}_3{F}$}, and \textit{${O}_2$} determine plasma chemistry and material selectivity.
    \item \textbf{Plasma Powers:} Power 1 and Power 2 regulate the RF energy delivered to sustain and modulate the plasma.
    \item \textbf{Pulsing Parameters:} Duty cycle and frequency control temporal variations in plasma intensity, influencing etch anisotropy and uniformity.
    \item \textbf{Wafer Temperature:} Maintains thermal stability to minimize undesirable effects, such as polymer deposition or trench bowing.
\end{itemize}

\textbf{Simulator Outputs:} The simulator evaluates the effect of recipe parameters on critical performance metrics:
\begin{itemize}[leftmargin=*]
    \item \textbf{Etch Depth and Rate:} Quantify the depth and speed of material removal.
    \item \textbf{Mask Remaining:} Measures the residual thickness of the photoresist after etching.
    \item \textbf{Top CD and Bow CD:} Represent the critical dimension at the surface and along the trench profile.
    \item \textbf{$\Delta$CD:} Captures the variation in CD from top to bottom.
\end{itemize}

\textbf{Target and Deviant Etch Profiles:} The rightmost panels in Figure~\ref{fig:semiconductor-manufacturing-overview-nature} illustrate desired and undesired etching outcomes:
\begin{itemize}[leftmargin=*]
    \item \textbf{Target Profile:} Features uniform etch depth and vertical sidewalls with minimal CD variation, meeting the design specifications.
    \item \textbf{Deviant Profiles:} Highlight potential issues such as trench wall bowing, polymer deposition, or incomplete etching caused by process variability.
\end{itemize}

The plasma etching process is inherently complex due to the interactions between recipe parameters, plasma dynamics, and process variability. Optimizing these parameters is critical to ensuring consistent, high-quality manufacturing outcomes, particularly in applications where nanometer-level precision is required. 
Table \ref{appendix-table:input_parameters-constraints} outlines the input parameter constraints for the etching process, including the input ranges specified by human engineers. Here, ``SE" represents senior engineers, and ``JE" represents junior engineers. 
Table \ref{appendix-table:process_output_targets} presents the output targets for the etching process, including the criteria for three categories: ``Meets Target" (ideal target), ``Close to Target," and ``Far from Target." These categories are defined based on specific ranges for key metrics such as etch depth, etch rate, mask remaining, top CD, $\Delta$CD, and bow CD.

\begin{table}[!htbp]
\centering
\caption{Input parameter search ranges \citep{kanarik2023human}.}
\scriptsize{
	\begin{adjustbox}{width=0.95\textwidth,center}
\begin{tabular}{@{}lccccccc@{}}
\toprule
\textbf{Input Parameters} & \textbf{Unconstrained} & \textbf{SE1} & \textbf{SE2} & \textbf{SE3} & \textbf{JE1} & \textbf{JE2} & \textbf{JE3} \\ \midrule
Pressure (mT)             & 5 -- 120              & 12 -- 30     & 12 -- 30     & 5 -- 23      & 5 -- 23      & 20 -- 38     & 12 -- 30     \\
Power 1 (W)               & 0 -- 29,000           & 4,000 -- 15,000 & 4,000 -- 15,000 & 4,000 -- 15,000 & 4,000 -- 15,000 & 14,000 -- 25,000 & 4,000 -- 15,000 \\
Power 2 (W)               & 0 -- 10,000           & 1,000 -- 7,000  & 1,000 -- 7,000  & 0 -- 6,000      & 1,000 -- 7,000  & 2,000 -- 8,000  & 0 -- 6,000      \\
Ar Flow (sccm)            & 0 -- 1,000            & 100 -- 400     & 0 -- 300       & 0 -- 300       & 0 -- 300       & 300 -- 600     & 0 -- 300       \\
C\textsubscript{4}F\textsubscript{8} Flow (sccm) & 0 -- 100       & 20 -- 60       & 20 -- 60       & 10 -- 50       & 0 -- 40        & 40 -- 80       & 20 -- 60       \\
C\textsubscript{4}F\textsubscript{6} Flow (sccm) & 0 -- 100       & 22 -- 66       & 15 -- 59       & 10 -- 54       & 0 -- 44        & 48 -- 96       & 12 -- 56       \\
CH\textsubscript{4} Flow (sccm) & 0 -- 20         & 0 -- 5         & 0 -- 5         & 0 -- 5         & 7.5 -- 12.5    & 3 -- 8         & 15 -- 20       \\
O\textsubscript{2} Flow (sccm) & 0 -- 50          & 20 -- 50       & 20 -- 50       & 10 -- 40       & 10 -- 40       & 20 -- 50       & 20 -- 50       \\
Pulse Duty Cycle (\%)     & 10 -- 100             & 20 -- 60       & 30 -- 70       & 10 -- 50       & 10 -- 50       & 30 -- 70       & 10 -- 50       \\
Pulse Frequency (Hz)      & 500 -- 2,000          & 1,000          & 1,000          & 1,000          & 1,000          & 1,000          & 1,000          \\
Temperature (°C)          & -15 -- 80             & 20 -- 45       & 10 -- 35       & 30 -- 55       & 20 -- 45       & 25 -- 50       & 15 -- 40       \\ \bottomrule
\end{tabular}
\end{adjustbox}
}
\label{appendix-table:input_parameters-constraints}
\end{table}

\begin{table*}[!htbp]
\centering
\caption{Etching Process output targets \citep{kanarik2023human}.}
\scriptsize{
	\begin{adjustbox}{width=0.92\textwidth,center}
\begin{tabular}{@{}lcccccc@{}}
\toprule
\textbf{Category}       & \textbf{Etch depth}     & \textbf{Etch rate} & \textbf{Mask remaining} & \textbf{Top CD}          & \textbf{$\Delta$CD}          & \textbf{Bow CD}        \\ \midrule
Meets target            & 2250 to 2750           & $\geq$100          & $\geq$350               & 190 to 210               & -15 to 15                   & 190 to 210             \\ \midrule
Close to target         & 2000 to 2250 or        & 70--100            & 300 to 350              & 160 to 190 or            & -60 to 15 or                & 160 to 190 or          \\
                        & 2750 to 3000           &                    &                         & 210 to 240               & 15 to 60                    & 210 to 240             \\ \midrule
Far from target         & $<$2000 or $>$3000     & $<$70              & $<$300                  & $<$160 or $>$240         & $<$-60 or $>$60             & $<$160 or $>$240       \\ \bottomrule
\end{tabular}
\end{adjustbox}
}
\label{appendix-table:process_output_targets}
\end{table*}

\section{Convergence of Algorithm \ref{alg:algorithm-mfl}}\label{app: convergence}
In this section, we discuss the convergence property of Algorithm \ref{alg:algorithm-mfl}.
For simplicity, we assume that the emulator model $\E$ perfectly approximates the machine model $\M$.
Then, both Loop A and Loop B in Algorithm \ref{alg:algorithm-mfl} are essentially minimizing the following loss function with respect to parameter $\theta$:
\begin{equation}\label{eq: loss_def}
\LL(\theta) = \dfrac{1}{n^\prime} \sum_{j=1}^{n^\prime} \norm{z_j^\prime - \M\big(\R_{\theta}(z_j^\prime)\big)}^2.
\end{equation}
Below, we do not distinguish between Loop A and Loop B, and consider a total of $T$ iterations of Algorithm \ref{alg:algorithm-mfl}.
Since $\R_\theta$ is parameterized by a neural network, we generally cannot ensure the convergence to the global minimum of $\LL(\theta)$.
In the following theorem, we demonstrate that Algorithm \ref{alg:algorithm-mfl} converges to a stationary point of $\LL(\theta)$, if $\LL$ is Lipschitz smooth in $\theta$ (Note that this property naturally holds true if $\M$ and $\R_\theta$ exhibit certain smoothness properties).
When there are sufficiently many samples (i.e., $n^\prime$ is large), we can expect that the achieved stationary point corresponds to a reasonably good reverse model.
\\

\begin{theorem}\label{thm: convergence}
Suppose that $\LL(\theta)$ is $L$-Lipschitz smooth. Then, if the learning rate satisfies that $\alpha^t \equiv \alpha < 1/L$ for all $t = 0,\dots, T-1$, it holds that
\begin{equation}\label{eq: stationarity}
\sum_{t=0}^{T-1} \norm{\nabla_\theta \LL(\theta^t)}^2\leq \dfrac{\LL(\theta^0) - \LL(\theta^T)}{\alpha (1- \alpha L/2)}.
\end{equation}
Therefore, it holds that $\lim_{t\rightarrow \infty} \norm{\nabla_\theta \LL(\theta^t)}^2 = 0$.
\end{theorem}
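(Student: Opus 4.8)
The plan is to use the standard descent-lemma argument for gradient descent on a Lipschitz-smooth objective, then telescope and take limits. First I would invoke the $L$-Lipschitz smoothness of $\LL$ to obtain, for any two parameters $\theta,\theta'$, the quadratic upper bound $\LL(\theta')\leq \LL(\theta)+\langle\nabla_\theta\LL(\theta),\theta'-\theta\rangle+\tfrac{L}{2}\norm{\theta'-\theta}^2$. Applying this at $\theta=\theta^t$ and $\theta'=\theta^{t+1}=\theta^t-\alpha\nabla_\theta\LL(\theta^t)$ gives $\LL(\theta^{t+1})\leq \LL(\theta^t)-\alpha\norm{\nabla_\theta\LL(\theta^t)}^2+\tfrac{L\alpha^2}{2}\norm{\nabla_\theta\LL(\theta^t)}^2=\LL(\theta^t)-\alpha\bigl(1-\tfrac{\alpha L}{2}\bigr)\norm{\nabla_\theta\LL(\theta^t)}^2$.

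Next I would rearrange this per-step inequality to isolate $\norm{\nabla_\theta\LL(\theta^t)}^2$, using that $\alpha<1/L$ implies $1-\alpha L/2>1/2>0$ so the coefficient is strictly positive and division is legitimate: $\norm{\nabla_\theta\LL(\theta^t)}^2\leq \tfrac{\LL(\theta^t)-\LL(\theta^{t+1})}{\alpha(1-\alpha L/2)}$. Summing over $t=0,\dots,T-1$, the right-hand side telescopes to $\tfrac{\LL(\theta^0)-\LL(\theta^T)}{\alpha(1-\alpha L/2)}$, which is exactly \eqref{eq: stationarity}.

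For the final limit claim, I would note that the descent inequality shows $\LL(\theta^t)$ is nonincreasing in $t$; assuming $\LL$ is bounded below (implicitly true since it is a sum of squared norms, hence $\geq 0$), the sequence $\LL(\theta^t)$ converges, so $\LL(\theta^0)-\LL(\theta^T)$ is bounded uniformly in $T$. Letting $T\to\infty$ in the telescoped bound shows $\sum_{t=0}^{\infty}\norm{\nabla_\theta\LL(\theta^t)}^2<\infty$, and convergence of this series forces its terms to vanish, i.e.\ $\lim_{t\to\infty}\norm{\nabla_\theta\LL(\theta^t)}^2=0$.

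There is no serious obstacle here; this is a textbook argument. The only point requiring a word of care is the lower-boundedness of $\LL$ needed for the limit statement — it is not stated as a hypothesis in the theorem, but it holds automatically because $\LL(\theta)=\tfrac1{n'}\sum_j\norm{z_j'-\M(\R_\theta(z_j'))}^2\geq 0$, so I would simply remark on this nonnegativity rather than add an assumption. One could also state the cleaner finite-time consequence $\min_{0\le t<T}\norm{\nabla_\theta\LL(\theta^t)}^2\leq \tfrac{\LL(\theta^0)-\inf\LL}{T\alpha(1-\alpha L/2)}=O(1/T)$, but that is optional.
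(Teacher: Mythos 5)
Your proposal is correct and follows essentially the same route as the paper: the descent lemma from $L$-smoothness, substitution of the gradient-descent update, rearrangement into a per-step bound, and a telescoping sum yielding \eqref{eq: stationarity}. Your added remark on the nonnegativity of $\LL$ to justify the limit claim is a point the paper's proof leaves implicit, and is a welcome clarification rather than a deviation.
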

\begin{proof}[Proof of Theorem \ref{thm: convergence}]

The proof is based on the basic optimization theory.
By the smoothness of $\LL(\theta)$, it holds that
\begin{equation}
\LL(\theta^{t+1}) \leq \LL(\theta^t) + \nabla_\theta\LL(\theta^t)^\top(\theta^{t+1} - \theta^t) + \dfrac{L}{2}\norm{\theta^{t+1} - \theta^t}^2.
\end{equation}
Then, we substitute the relation $\theta^{t+1}=\theta^t-\alpha \cdot \nabla_\theta \mathcal{L}\left(\theta^t\right)$ to the above equation, which yields that
\begin{equation}
\LL(\theta^{t+1}) \leq \LL(\theta^t) - \alpha \norm{\nabla_\theta\LL(\theta^t)}^2 + \dfrac{\alpha^2 L }{2}\norm{\nabla_\theta\LL(\theta^t)}^2.
\end{equation}
Equivalently, this is can be written as
\begin{equation}\label{eq: to_be_tele}
\alpha(1-\alpha L /2)\norm{\nabla_\theta\LL(\theta^t)}^2 \leq   \LL(\theta^t) - \LL(\theta^{t+1})
\end{equation}
Finally, Eq. \eqref{eq: stationarity} can be obtained by performing a telescoping sum on Eq. \eqref{eq: to_be_tele} from $t=0$ to $t=T-1$.
\end{proof}

\section{Baseline Method for Comparative Analysis}
\label{appendix-sec:random-search-optimization}

As described in Algorithm \ref{algorithm:random-search-local-refinement}, the \textbf{Large-Scale Random Search with Local Refinement (LSRS-LR)} algorithm is also designed to efficiently optimize inputs for a pre-trained emulator model to achieve a target output. 

The LSRS-LR algorithm operates in two stages: 
\begin{itemize}[leftmargin=*]
    \item First, a \textit{large-scale random search} generates $N$ random candidate inputs within predefined bounds, evaluates their outputs using the emulator model, and selects the top $K$ candidates based on the lowest mean squared error (MSE) loss.
    \item In the second stage, \textit{gradient-based local refinement}, each selected candidate is expressed as a neural network parameter and undergoes iterative optimization using the Adam optimizer to further minimize the prediction error while ensuring the inputs remain within valid bounds. 
\end{itemize}
The candidate with the lowest final loss is selected as the output of the algorithm, denoted as $x^*$. By balancing exploration and exploitation, this approach enhances efficiency in complex optimization tasks such as semiconductor manufacturing.

To evaluate the effectiveness of LSRS-LR, we conduct a comparative experiment between the {MFL} algorithm (Algorithm~\ref{alg:algorithm-mfl}) and the {LSRS-LR} algorithm (Algorithm~\ref{algorithm:random-search-local-refinement}). As shown in Table~\ref{tab:etch_data_semiconductor_manufacturing_LSRS-LR}, the ``Target range" row defines the acceptable operational ranges for each performance metric. The results demonstrate that MFL outperforms LSRS-LR, particularly in \textit{etch depth}, \textit{mask remaining}, and \textit{critical dimension (CD) control}. Notably, MFL consistently maintains values within the target range across all parameters, whereas LSRS-LR exhibits deviations in mask remaining and etch depth.
These findings underscore the superior precision and accuracy of MFL, reinforcing its effectiveness in semiconductor manufacturing optimization.

The experimental settings for Algorithm~\ref{algorithm:random-search-local-refinement} are as follows: the parameters are set to $N=100$, $K=10$, $T=200$, and $\eta=0.01$. The search space bound $[x_{\min}, x_{\max}]$ is provided in Table~\ref{appendix-table:algorithm-hyparameter-experiments-training}, and the target output, $y_{\text{target}}$, is specified as \([2260, 110, 360, 200, 10, 200]\).

\begin{algorithm}[htbp]
\caption{Large Scale Random Search with Local Refinement (LSRS-LR)}
\label{algorithm:random-search-local-refinement}
\begin{algorithmic}[1]
\STATE \textbf{Input:} Pre-trained emulator model $\E$, target output $y_{\text{target}}$, search space bound $[x_{\min}, x_{\max}]$, number of initial samples $N$, number of selected candidates $K$, refinement iteration number $T$, leanring rate $\eta$.
\STATE \textcolor{gray}{ \textbf{// Step 1: Large-scale random search.}}
\STATE Generate $N$ random input samples: $x_i \sim \mathcal{U}(x_{\min}, x_{\max})$, for $i = 1, \dots, N$.
\STATE Compute emulator ouput: $y_i = \E(x_i)$, for $i = 1, \dots, N$.
\STATE Compute MSE losses: $\LL_i = \norm{y_i - y_{\text{target}}}^2$, for $i = 1, \dots, N$.
\STATE Select the best $K$ candidates with minimum MSE losses, denoted as $\{x_{i_k}\}_{k=1}^{K}$.
\STATE \textcolor{gray}{ \textbf{// Step 2: Gradient-based local refinement for every candidate $x_{i_k}$.}}
\FOR{$k=1$ to $K$}
    \STATE Initialize $x \leftarrow x_{i_k}$ and express $x$ as a neural network parameter.
    \FOR{$t=1$ to $T$}
        \STATE Compute emulator output: $y = \mathcal{E}(x)$.
        \STATE Compute MSE loss: $\LL(x) = \norm{ y - y_{\text{target}}}^2 =\norm{\E( x) - y_{\text{target}}}^2 $.
        \STATE Update $x$ with learning rate $\eta$.
        \STATE Project $x$ to the feasible search space: ${x} = \text{clip}(x, x_{\min}, x_{\max})$.
    \ENDFOR
    \STATE Store the refined candidate $\hat x_{i_k} = x$, and the associated loss $\LL(\hat x_{i_k})$.
\ENDFOR
\STATE \textbf{Output:} The refined input with the smallest loss $x^* = \underset{k \in  \set{1,\dots,K}}{\argmin} \mathcal{L}(\hat x_{i_k})$. 
\end{algorithmic}
\end{algorithm}


\begin{table}[htbp]
    \centering
    \caption{Performance comparison between MFL (Algorithm \ref{alg:algorithm-mfl}) and LSRS-LR (Algorithm \ref{algorithm:random-search-local-refinement}). Values that do not meet the target requirements are highlighted in \textcolor{red}{red}, while those that satisfy the target requirements are highlighted in \textcolor{blue}{blue}.}
    \scriptsize{
	\begin{adjustbox}{width=0.98\textwidth,center}
    \begin{tabular}{ccccccc}
        \hline
        \textbf{Category} & \textbf{Etch depth} & \textbf{Etch rate} & \textbf{Mask remaining} & \textbf{Top CD} & \textbf{$\Delta$CD} & \textbf{Bow CD} \\
        \hline
        Target range & {2250 to 2750} & $\geq$100 & $\geq$350 & 190 to 210 & -15 to 15 & 190 to 210 \\
        LSRS-LR (Ours) & \textcolor{red}{2242.7803} & \textcolor{blue}{138.3506} & \textcolor{red}{340.0843} & \textcolor{blue}{194.1832} & \textcolor{blue}{3.1649} & \textcolor{blue}{192.0442} \\
        MFL (Ours) & \textcolor{blue}{2255.5466} & \textcolor{blue}{109.8992} & \textcolor{blue}{358.9485} & \textcolor{blue}{198.7954} & \textcolor{blue}{10.0430} & \textcolor{blue}{198.5240} \\
        \hline
    \end{tabular}
    \end{adjustbox}
    }    
    \label{tab:etch_data_semiconductor_manufacturing_LSRS-LR}
\end{table}

\section{Applications Beyond Semiconductor Manufacturing}
\label{appendix:applications-beyond-semicondutor}

To further evaluate the generality of the MFL framework, we applied it to wire bonding\footnote{\url{https://en.wikipedia.org/wiki/Wire_bonding}}, a critical process in electronic packaging. As shown in Table~\ref{table:wire-bonding-target} and Table~\ref{table:wire-bonding-input}, MFL successfully identified input parameters that satisfy strict domain-specific constraints while producing outputs that meet precise target specifications. Table~\ref{table:wire-bonding-target} presents the predicted target values generated by MFL, including pull strength and bonding offsets, all of which fall well within the specified acceptable ranges. Similarly, Table~\ref{table:wire-bonding-input} reports the corresponding input parameters discovered by MFL—such as bonding pressure, time, temperature, and material dimensions—all adhering to the process constraints. Notably, MFL converged to a valid solution in only 9 iterations, demonstrating both efficiency and reliability in this new application domain.

These results further demonstrate the broad applicability and robustness of MFL across diverse applications beyond semiconductor etching, highlighting its potential for widespread deployment in real-world industrial settings.

\begin{table}[h]
\centering
\caption{Wire Bonding Target}
\resizebox{0.65\textwidth}{!}{
\begin{tabular}{lcc}
\hline
\textbf{Target Type} & \textbf{MFL (Ours)} & \textbf{Target Constraints} \\
\hline
Pull strength [gf] & 15.005 & (5, 25) \\
Bonding x-offset [µm] & -0.0001 & (-20, 20) \\
Bonding y-offset [µm] & -0.0002 & (-20, 20) \\
\hline
\end{tabular}
}
\label{table:wire-bonding-target}
\end{table}

\begin{table}[h]
\centering
\captionof{table}{Wire Bonding Input}
\resizebox{0.65\textwidth}{!}{
\begin{tabular}{lcc}
\hline
\textbf{Input Type} & \textbf{MFL (Ours)} & \textbf{Input Constraints} \\
\hline
Bonding pressure [gf] & 98.6096 & (20, 120) \\
Bonding time [ms] & 19.9200 & (1, 30) \\
Temperature [°C] & 240.9897 & (100, 300) \\
Wire diameter [µm] & 28.0635 & (15, 33) \\
Wire length [mm] & 3.2536 & (0.5, 5.0) \\
Pad diameter [µm] & 88.7107 & (50, 150) \\
\hline
\end{tabular}
}
\label{table:wire-bonding-input}
\end{table}

\clearpage
\section{Experimental Setup}
\label{appendix:experiment-settings}

We conducted our experiments on a server equipped with an AMD Ryzen 7 2700X CPU and an NVIDIA GeForce RTX 2060 GPU. The operating system used is Ubuntu 20.04.3 LTS. Table~\ref{appendix-table:algorithm-hyparameter-experiments-training} provides a summary of the key parameters used in our experiments.

\begin{table}[!htbp]
 \renewcommand{\arraystretch}{1.2}
  \centering
  \caption{Key parameters used in the experiments. }
\label{appendix-table:algorithm-hyparameter-experiments-training}
  \begin{threeparttable}
    \begin{tabular}{cc|cc}
    \toprule
    Parameters & value & Parameters & value \\
    \midrule
    lr, i.e., $\alpha_1$ & 0.01             &       epoch & 700    \\  
           hidden layer dim & 64         & neural network  & MLP  \\ 
           Pressure (mT)   & (5, 38)          & Power 1 (W)  & (4000, 25000)  \\
           Power 2 (W) &     (0, 8000)      & Ar Flow (sccm)  & (0, 600)  \\
           C\textsubscript{4}F\textsubscript{8} Flow (sccm) &     (0, 80)     & C\textsubscript{4}F\textsubscript{6} Flow (sccm)  & (0, 66)  \\
           CH\textsubscript{4} Flow (sccm) &    (0, 20)     & O\textsubscript{2} Flow (sccm)  & (0, 50)  \\
           Pulse Duty Cycle (\%) &     (10, 70)      & Pulse Frequency (Hz)  & (950, 1050))  \\
           Temperature (\textdegree C) &   (10, 55)    & num dimensions  & 11  \\
           gradient threshold, i.e., $\delta$ & 0.9 & $T_0$ & 1150 \\
           $\tau_0$ & 150 & $\tau$ & 200 \\ 
           $T$ & 1200 & $\alpha_2$ & 0.99*0.01 \\
    \bottomrule
    \end{tabular}    
    \end{threeparttable}
\end{table}


\end{document}